\documentclass[preprint,5p,times,numbered]{elsarticle}
\usepackage{booktabs} 
\usepackage{natbib}
\usepackage{amssymb,amsfonts}
\usepackage{algorithmic}
\usepackage{graphicx}
\usepackage{textcomp}
\usepackage{xcolor}
\usepackage{subfigure}
\usepackage{amsmath, pgfplots, graphicx, siunitx, tikzscale}
\usepackage{pgfplots,gincltex}
\usepackage{siunitx}
\usepackage{pgfplots}
\usepackage{mathtools}
\usepackage{amsthm}
\usepackage[ruled,vlined,linesnumbered]{algorithm2e}
\pagestyle{empty} 
\usepackage{graphicx}
\usepackage{subcaption}
\usepackage{multirow}
\sisetup{detect-all}
\usepackage{makecell}
\usepackage{appendix}

\DeclareMathOperator*{\argmax}{arg\,max}

\newtheorem{theorem}{Theorem}

\newtheorem{prop}{Proposition}

\usepackage[font=normalsize]{caption}
\usepackage{url}

\begin{document}
	\begin{sloppypar}
		\begin{frontmatter}
			
			\title{Adaptive NAD: Online and Self-adaptive Unsupervised \emph{N}etwork \emph{A}nomaly \emph{D}etector}
			
			\author[1]{Yachao~Yuan} 
			\affiliation[1]{organization={School of Future Science and Engineering, Soochow University},
				city={Suzhou},
				state={Jiangsu},
				country={China}}
			
			
			\affiliation[3]{organization={School of Cyber Science and Engineering, Southeast  University},
				city={Nanjing},
				state={Jiangsu},
				country={China}}
			
			
			\author[3]{Yu~Huang} 
			
			\author[1]{Yingwen~Wu*} 
			\ead{Corresponding author: (ywwu@suda.edu.cn)}

			\begin{abstract}
				The widespread usage of the Internet of Things (IoT) has raised the risks of cyber threats; thus, developing Anomaly Detection Systems (ADSs) that can adapt to evolving traffic pattern is critical. Previous studies primarily focused on offline unsupervised learning methods to safeguard ADSs, which is not applicable in practical real-world applications. In this paper, we design Adaptive NAD, an online and self-\emph{Adaptive} unsupervised \emph{N}etwork \emph{A}nomaly \emph{D}etection framework for security domains. A two-layer anomaly detection strategy is proposed to generate reliable high-confidence pseudo-labels. Then, an online training scheme is introduced to update Adaptive NAD by a novel threshold calculation technique. Experimental results demonstrate that Adaptive NAD achieves the lowest false alarm  rate (1.33\%, 0.71\%, and 0.08\%) and has a more than 3 times faster online inference latency compared with state-of-the-art solutions on the CIC-Darknet2020, NSL-KDD, and Edge-IIoTset datasets, respectively. The code is released at https://github.com/MyLearnCodeSpace/Adaptive-NAD.
			\end{abstract}
			
			
			
			\begin{keyword}
				Anomaly detection  \sep unsupervised learning  \sep online learning  \sep dynamic threshold
			\end{keyword}
			
		\end{frontmatter}

		\section{Introduction} \label{sec:introduction}
		The rapid expansion of the Internet of Things (IoT) has revolutionized many areas, such as healthcare, manufacturing, and autonomous driving \cite{esteva2019guide,wang2018deep,levinson2011towards}. However, it also increases risks of cybersecurity attacks, for example, UnitedHealth’s 872 Million Cyberattack in 2024~\footnote{\url{https://www.sec.gov/Archives/edgar/data/731766/000073176624000146/a2024q1exhibit991.htm}} and the I.M.F. breach in 2024~\footnote{\url{https://www.bleepingcomputer.com/news/security/international-monetary-fund-email-accounts-hacked-in-cyberattack/}}. Anomaly Detection Systems (ADSs) are essential to discover potential network security problems. The advances in machine learning, particularly deep learning, have significantly accelerated this progress \cite{bovenzi2023network,saba2022anomaly,chen2024network,chen2019unsupervised}. 
		
		In the real world, the environment is dynamic and ever-changing. Both normal data and anomalies are evolving over time. Thus, traditional offline training-based methods are not applicable in practical applications. Given that machine learning models strongly rely on learned data patterns, any shifts in normal/abnormal behavior could damage anomaly detection performance. Hence, ADSs demand constant adjustment to the model's changing behavior over time in an online manner. Online learning provides a potential way to improve models' adaptability by continuously retraining the model. Moreover, unsupervised learning-based models can directly learn from only normal data without any known threats \cite{tuli2023deepft,liu2024timesurl}; they are more favorable for online learning tasks of anomaly detection due to the inherent lack of labeled anomalies in historical data and the unpredictable and highly varied nature of anomalies. 
		
		In practical security-related applications, it is challenging to establish an effective online unsupervised learning scheme demanding expeditious adaptation of the model to a continuous incoming of unlabeled data.
		
		Most recent existing research \cite{audibert2020usad,zhang2023real,han2021deepaid} conducts anomaly detection by using a fixed threshold, which tends to fail to detect anomalies when concept drifts exist. To address this, a dynamic threshold is utilized to distinguish normal from anomalies in \cite{tuli2023deepft,odiathevar2021online}, but suffers from high false positive rates. Additionally, most of the existing literature \cite{audibert2020usad,li2021multivariate,deng2021graph,han2021deepaid,zong2018deep,han2022learning} trains their models offline without any adaption to drifting patterns, making them not applicable in real-world applications. 
		There is only limited research on online unsupervised anomaly detection, and most of them, like \cite{baldini2022online,wang2022distributed,chen2021daemon}, consider \textquotedblleft online\textquotedblright~as the model being trained offline and utilized for real-time inferences without model updating. 
		

		\textbf{Our work.} To address the above issues, we propose a general framework, named \textbf{Adaptive NAD}, to improve the unsupervised online training of deep learning-based anomaly detection in security applications. The high-level goal of Adaptive NAD is to design a fast and accurate anomaly detection model that can be trained in an online and unsupervised manner and meets several practical security requirements (such as generalization, low false positives and low latency). To this end, we design two key strategies in Adaptive NAD: (i) a two-layer unsupervised anomaly detection scheme by leveraging the advantages of both supervised and unsupervised models to generate highly confident pseudo labels in highly imbalanced network streaming data. (ii) Online training scheme for both fine-tuning models and regenerating loss distributions aligned with the evolving network events, and making real-time predictions of network anomalies given streaming network traffic. 
		
		Overall, Adaptive NAD is a novel online and self-adaptive unsupervised network anomaly detection framework for its real-time predictions, that can be utilized with any pair of unsupervised-supervised models for online unsupervised anomaly detection tasks.
		
		The contributions of this paper are mainly threefold:
		\begin{itemize}
			\item A novel online and self-adaptive unsupervised framework, named Adaptive NAD, that can be utilized with any pair of unsupervised and supervised learning models for unsupervised anomaly detection.
			\item A new dynamic threshold technique with theoretical proof for determining high-confidence pseudo labels can be employed for fine-tuning online models.
			\item We compare it with the state-of-the-art models on different datasets and the proposed Adaptive NAD achieves the lowest false alarm rate (1.33\%, 0.71\%, and 0.08\%) and has more than 3 times faster online inference latency. 
		\end{itemize}
		
		The rest of the paper is outlined as follows: we first give an overview of the system design in Section \ref{sec:systemDesign}, including design goals, system components, and details of the models used in Adaptive NAD. Then, the proposed novel techniques, i.e., two-layer anomaly detection strategy (ITAD-S), online training scheme, and threshold calculation technique, are presented in Section \ref{sec:Methodologies} along with a comprehensive complexity and storage analysis of the proposed Adaptive NAD framework. Following it, Section \ref{sec:Evaluation} describes the experimental setup and results. Existing related literature is discussed in Section \ref{sec:RelatedWork}. Finally, we conclude the paper in Section \ref{sec:Conclusions}.
		
		\begin{figure*}[t]
			\centering
			\includegraphics[width=0.9\linewidth]{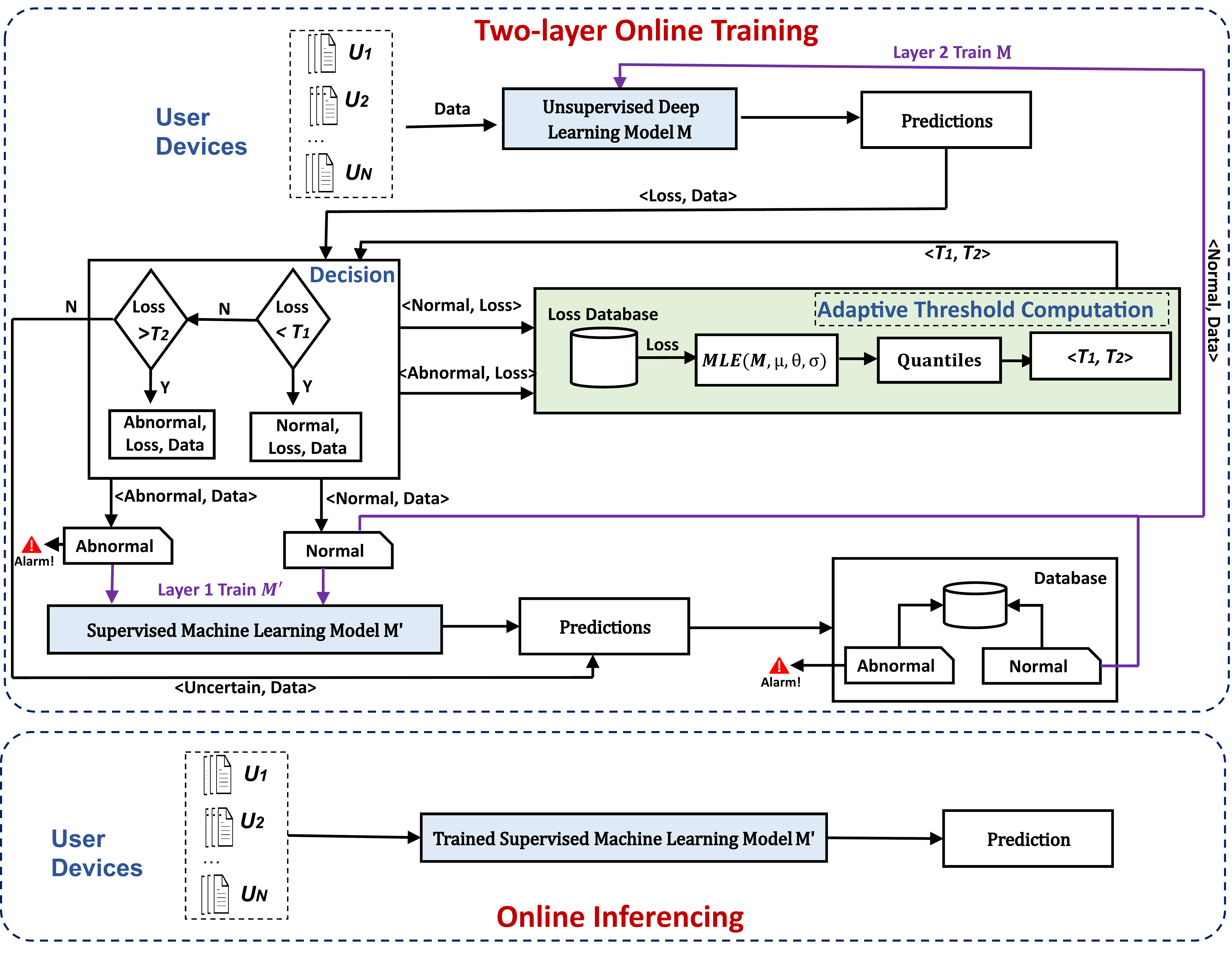}
			\caption{The proposed Adaptive NAD framework.}
			\label{fi:framework}
		\end{figure*}

		\section{System Design} \label{sec:systemDesign}
		This section presents the design goals, system components of the proposed Adaptive NAD, and details of the two key models utilized in Adaptive NAD.
		\subsection{Design Goals} \label{subsection:Design Goals}
		The design goals of our proposed approach are explained below:
		\begin{enumerate}
			\item \textbf{Capable of handling diverse data:}
			\textcolor{black}{Network traffic} collected from different devices varies greatly regarding their format and content. A generalized ADS needs to be capable of analyzing various network formats.
			\item \textbf{High accuracy but low false positive rate:}
			Anomaly detection should correctly predict both normal and abnormal activities; in other words, achieving a high accuracy rate but a low false positive rate.
			\item \textbf{Low computational cost:} High computational cost incurred during anomaly detection leads to increased latency, deployment, and maintenance costs.
		\end{enumerate}
		
		\subsection{System Components} \label{subsection:architecutureCom}
		As shown in Fig. \ref{fi:framework}, the proposed system comprises the following components:
		\begin{itemize}
			\item \underline{User Devices:} \textcolor{black}{User devices are terminal devices in distributed IoT systems that serve as the primary source of network traffic data. These devices generate, collect, and transmit data to the network, enabling real-time network anomaly detection.}
			\item \underline{Unsupervised deep Learning Model $M$:} This model is used to learn the network traffic using unsupervised deep learning models. In this paper, an unsupervised learning model (i.e., NAD) is utilized to learn the pattern of the normal network traffic (see Section~\ref{subsection:LSTM}). 
			\item \underline{Predictor:} After the deep learning model $M$ is trained on the normal network traffic, $M$ can be used to predict the new upcoming network data.
			\item \underline{Adaptive Threshold Computation:} This model generates two thresholds $T_1$ and $T_2$ by using Quantiles and Maximum Likelihood Estimation (MLE) (see Section \ref{subsec:ThresholdCalculation}). Thresholds are adaptively changing to incorporate the characteristics of new network traffic to identify normal and abnormal network data.
			\item \underline{Decision:} This module is employed to select network data with the (1-$\alpha$) confidence level by dynamic thresholds, for the training of the machine learning model $M^{'}$ based on the predicted network loss data from the unsupervised deep learning model $M$. In addition, the loss of accurately selected network events is utilized to update the thresholds.
			\item \underline{Supervised Machine Learning Model:} This module uses simple machine learning methods, such as a Random Forest classifier (RF), to predict the uncertainty network traffic from an unsupervised deep learning model. This improves the detection performance of unsupervised learning models significantly. 
			\item \underline{Database:} The stored dataset is used to update the machine learning model.
		\end{itemize}

		\subsection{The Network Anomaly Detection Model (NAD)} \label{subsection:LSTM}
		The Network Anomaly Detection Model (NAD) is utilized as the unsupervised deep learning model $M$, which is adapted from VAE~\cite{kingma2013auto}.
		We use the Long Short-Term Memory (LSTM) for the encoder and decoder phase, and use a Gaussian distribution for $p(z)$ in VAE. NAD is a generative model whose aim is to perform and learn a probabilistic model $P$ from a sample $X$ with an unknown distribution. 
		It overcomes the three following drawbacks from its predecessors: Firstly, 
		it makes strong assumptions about the structure of the data. Secondly, 
		due to the approximation methods, they end up with sub-optimal models. Lastly, they generally use significant computational resources for the inference procedure like Markov Chain Monte Carlo (MCMC)~\cite{doersch2016tutorial}.
		
		The idea behind this model is to use latent variables $Z$, which might be understood as hidden variables that influence the behavior of $X$; thus, having the initial sample $X$, we need to train a model to learn the distribution of $Z$ given the distribution $X$. This allows us to eventually make the inference over $X$. We now express the model in probability terms, as follows. Specifically, for $x\in X$, $z\in Z$:
		\begin{equation} \label{eq:01}
			p(x)=\int_{z} p(x,z)dz
			=\int_{z} p(x|z)p(z)dz.
		\end{equation}
		
			Since the integral with respect to $z$ in the Eq.~\eqref{eq:01} is difficult to compute due to the high dimensional space of $Z$, we can express the log maximum likelihood of $logp_\theta(x)$ as \cite{kingma2013auto}:
			\begin{align}
				logp_\theta(x)=D_{kl}[p_\theta(z|x)||p_\theta(z)]+\mathcal{L}(x,\phi,\theta),
			\end{align}
			where in this equation, the first term $D_{kl}$ is the Kullback-Leibler divergence (a non negative term) and the second term can be seen as a lower bound of likelihood $logp_\theta(x)$. Thus, this bound is referred to as the variational lower bound, which can be expressed as: \cite{kingma2013auto}:
			\begin{align}
				logp_\theta(x) \geq \mathcal{L}(x,\phi,\theta)= E[logp_\theta(x,z)-logp(z|x)].
			\end{align}
			Due to the intractability of $p(z|x)$, NAD uses a learning model $q_{\phi}(z|x)$ that approximates $p(z|x)$ with $\phi$ as the parameter of distribution $q$. Then, we can rewrite the equation as:
			\begin{align}
				&\mathcal{L}(x,\phi,\theta)= E_{q_{\phi}(z|x)}[logp_\theta(x,z)-logq_\phi(z|x)] \nonumber \\
				&=-D\_kl[q_\phi(z|x)||p_\theta(z)]+E_{q_{\phi}(z|x)}[logp_\theta(x|z)], \label{eq:L}
			\end{align}
			where $p(z)$ is in general a simple distribution like Gaussian for continuous variable or Bernoulli for binary variable. Finally, we want to maximize $logp_\theta(x)$, but in Eq.~\eqref{eq:L} we observe that we need to minimize the Kullback-Leibler divergence. Thus, NAD optimizes the function $\mathcal{L}$ for this purpose~\cite{kingma2013auto}.

		Fig.~\ref{fi:models}(a) shows the NAD diagram. In its Encoder phase, we can see that given an input $X$, the model uses the LSTM encoder $q_{\phi}(z|x)$ with the multilayer network $f(x,\phi)$ to approximate the probability $p(z/x)$, which is the distribution of the latent variable $Z$ given $X$. In its Decoder phase, with the knowledge of $Z$, the model uses a LSTM decoder $p_\theta(x|z)$ with the multilayer network $g(z,\theta)$ to approximate the probability $p(x/z)$, which is the distribution of the input variable $X$ given its latent variable $Z$. Finally, the NAD model solves the optimization problem in Eq. (3) using the Encoder and Decoder learning models given as Eq. \eqref{eq:encoder} and Eq. \eqref{eq:decoder}, respectively.
		\begin{equation}
			\label{eq:encoder}
			q_\phi(z|x)=q(z,f(x,\phi)), 
		\end{equation}
		\begin{equation}
			\label{eq:decoder}
			p_\theta(x|z)=p(x,g(x,\theta)). 
		\end{equation}
		\begin{figure*}[t!]
			\centering
			\begin{minipage}{.5\textwidth}
				\centering
				\includegraphics[width=0.9\linewidth]{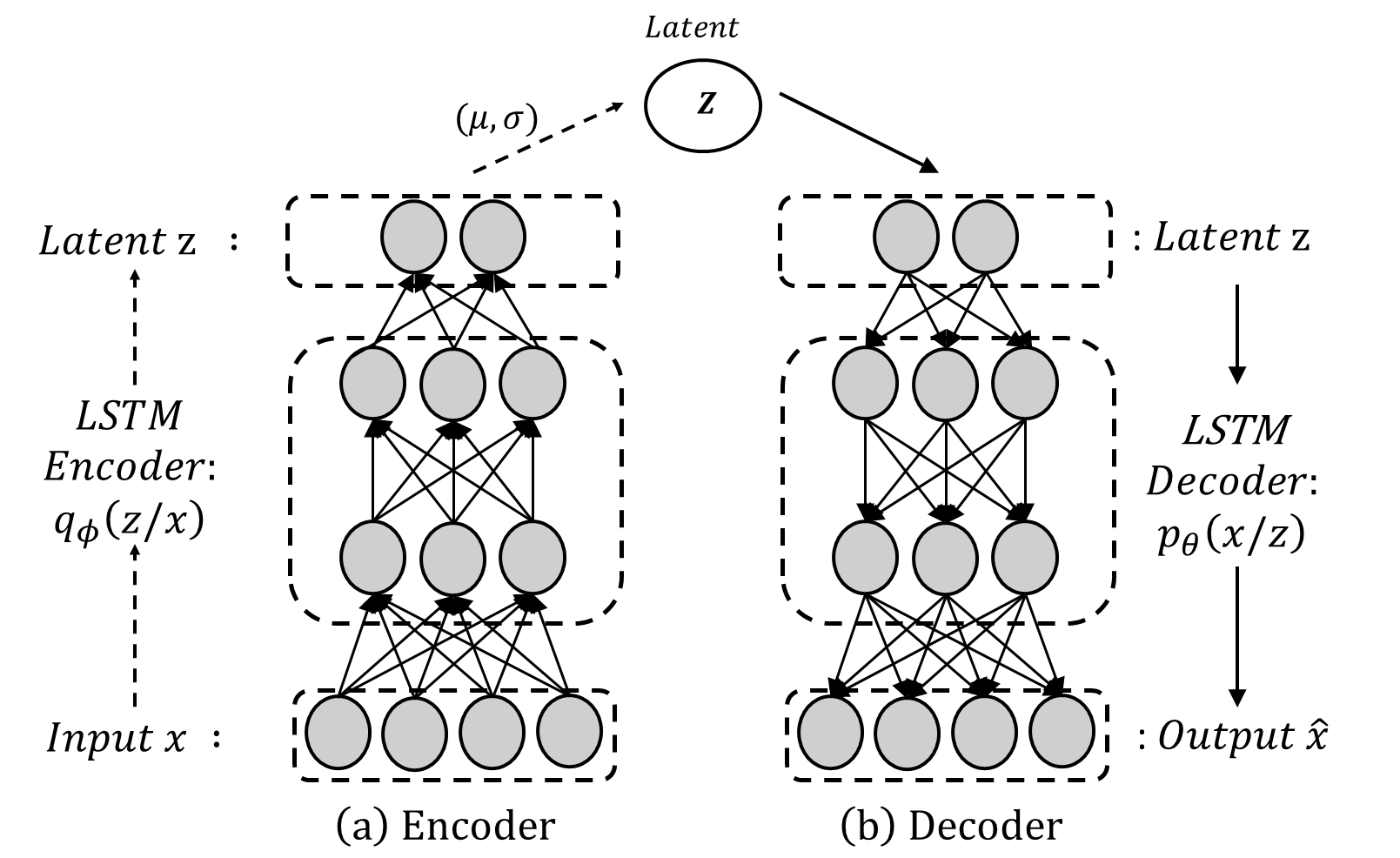}\\
				(a) NAD Event Model.
			\end{minipage}%
			\begin{minipage}{.4\textwidth}
				\centering
				\includegraphics[width=0.9\linewidth]{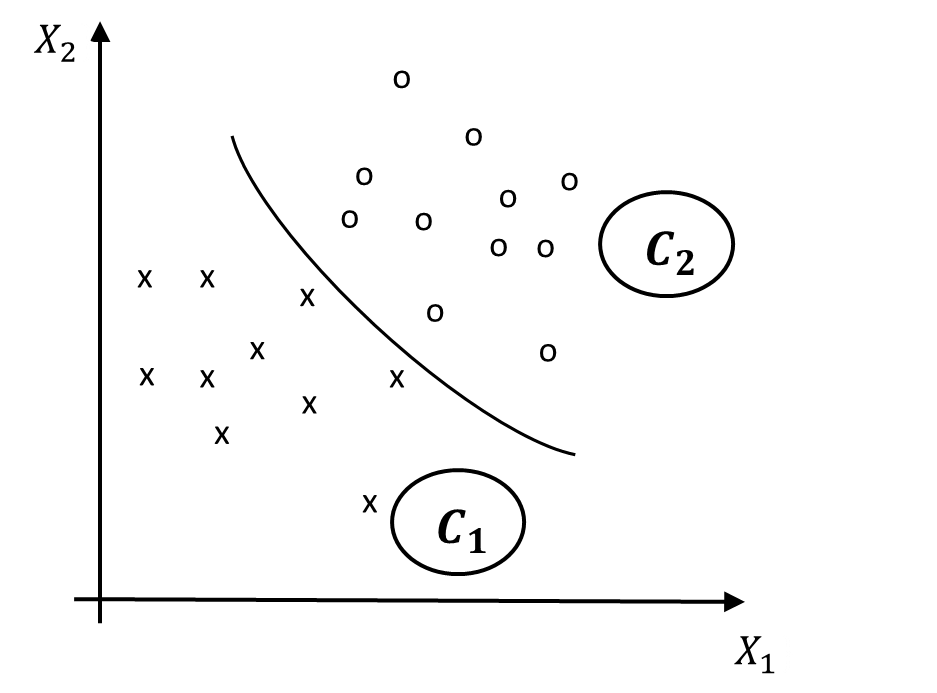}\\
				\vspace{-1mm}
				(b) RF classifier.
				\label{fi:GNBclass}
			\end{minipage}%
			\vspace{-1mm}
			\caption{Selected models.}
			\label{fi:models}
			\vspace{-3mm}
		\end{figure*}

		\subsection{Random Forest} \label{subsection:RF}
		Here, we use the Random Forest algorithm~\cite{rigatti2017random} as our supervised machine learning model $M^\prime$, which assumes that $X_1, X_2,\cdots, X_M$ are $M$ attributes. A sample $S$ is denoted as a vector $(x_1, x_2,\cdots, x_M)$, where $x_i$ is the observed value of $X_i$. We also define the class label \(C=\{c_1,c_2 \}\) consisting of two different values $c_1$ (normal) and $c_2$ (abnormal) whose classification depends on the attributes. Hence, the class predicted by the Random Forest method is given as,
		\begin{equation}
			C_{rf}(S) =\argmax_{c \in C}p(c) \prod_i p(x_i|c), \nonumber
		\end{equation}
		where training samples are employed to predict the $p(x_i|c)$ value. The decision boundary, the curve that separates the points in two groups $c_1$ and $c_2$, is calculated using the classifier function $f_{rf}$:
		\begin{equation}
			f_{rf}(S) = \frac{p(c=c_1)}{p(c=c_2)} \prod_i \frac{p(x_i|c=c_1)}{p(x_i|c=c_2)}
		\end{equation}
		Hence, the network traffic belongs to class $c_1$ if $f_{rf}(S)\geq 1 $, $C_2$ otherwise. 
		Fig.~\ref{fi:models}(b) shows a conceptual view of the RF classifier. 
		
		%
		
		\section{Methodologies} \label{sec:Methodologies}
		\subsection{Two-layer Anomaly Detection Strategy (ITAD-S)}
		ITAD-S consists of a two-layer model, i.e., an Unsupervised Deep Learning model $M$ in the first layer and a Supervised Machine Learning model $M^\prime$\footnote{In this paper, we use LSTM-VAE and RF as an example of the unsupervised deep learning model $M$ and supervised machine learning model $M^\prime$ to better elaborate our work.} in the second layer (as shown in Fig. \ref{fi:framework}). The key idea behind it is that it skillfully leverages the advantages of supervised learning to improve the performance of unsupervised learning models and ultimately construct an efficient unsupervised learning framework for anomaly detection in dynamic environments. This strategy has two critical processes, i.e., a high-confidence pseudo-label generation process and a prediction process for fully automated unsupervised anomaly detection.
		
		\textbf{High-confidence pseudo label generation process}: 
		This process utilizes a novel Adaptive Threshold Calculation technique to select high-confidence labeled samples predicted by the unsupervised learning model from the input data. It is utilized to eliminate manual labeling for autonomous updates. 
		
		Specifically, we can use a limited dataset 
		(e.g., the first day's data or randomly select 1\% from the dataset)
		as our first-round training set to train the unsupervised model $M$. 
		As aforementioned, the upcoming network events are predicted by the trained unsupervised model $M$ to predict their loss values. The network events are then divided into three categories by predetermined thresholds $\{T_1, T_2\}$\footnote{T1 and T2 are determined by the quantiles of the loss distributions for normal and abnormal samples, respectively.} (where $T_1<T_2$): high-confidence anomalous samples (i.e., true positive network events), high-confidence normal samples (i.e., true negative network events), and uncertain samples, as illustrated in Fig. \ref{fi:framework}.
		
		\textbf{Prediction process.}:
		This process predicts anomalies by the machine learning $M^\prime$ trained with the pseudo-labeled high-confidence samples from the previous process. 
		The combination of both the high-confidence pseudo label generation process and the prediction process enhances the adaptive NAD framework's learning ability and ensures a low false positive of the models. It is worth noting that the machine learning model $M^\prime$ (i.e., RF in our work) is employed to make final predictions, and once it detects an abnormal network event, it raises an alert immediately and sends this abnormal data to the system management. 
		
		\begin{figure}[t]
			\centering
			\includegraphics[width=0.95\linewidth]{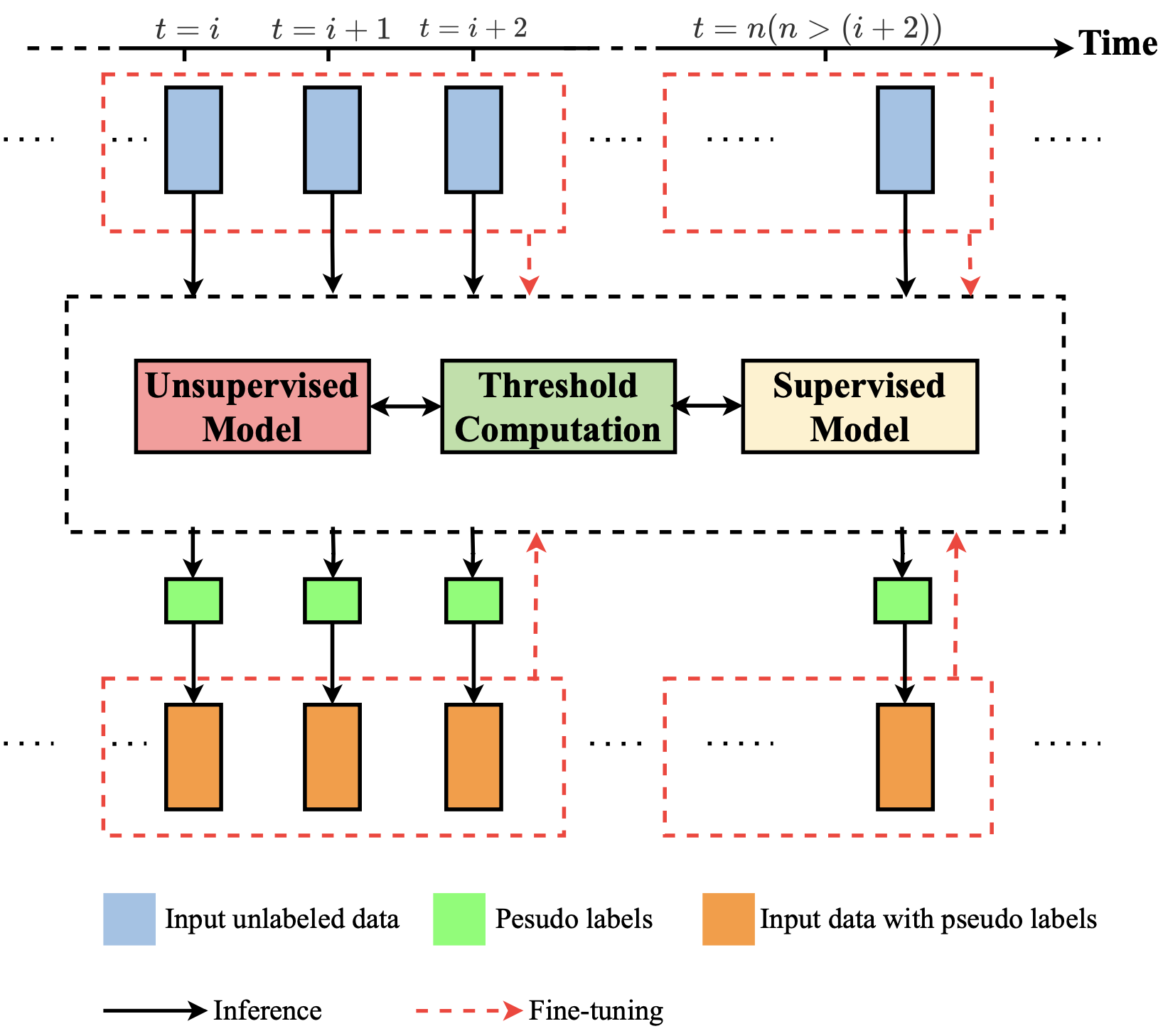}
			\caption{Timeline of Adaptive NAD.}
			\label{fig:onlineLearning}
			\vspace{-4mm}
		\end{figure}
		
		\subsection{\textbf{Online Training Scheme}}
		As illustrated in Fig. \ref{fi:framework}, online training scheme refers to periodically update thresholds $\{T_1, T_2\}$ and the models $\{M, M^\prime\}$ in Adaptive NAD, enabling it to efficiently and accurately predict emerging anomalies (e.g., zero-day attacks) and evolving normal samples (e.g., new normal patterns) in complex real-world environments. 
		
		
		\begin{algorithm}
			\caption{Online Training Scheme} \label{alg:TrainingAdaptiveNAD} 
			\SetKwInOut{Input}{Input}
			\SetKwInOut{Output}{Output}
			\Input{Dataset for the first-round of training: $X_0$;
				Streaming new input vector: \( \{ x_{l_0+1}, \ldots, x_i, \ldots \} \), \( i > l_0, i \in \mathbb{N} \).
			}
			\Output{Well-trained model $M$, $M^\prime$. }
			\BlankLine
			Randomly initialize the unsupervised model $M$\\
			Randomly initialize the machine learning model $M^\prime$;\\
			Train(\(X_0, M\)); \\
			\ForEach{$x_{0i} \in X_0$}{
				\(\text{Loss} \leftarrow M(x_{0i})\); \\
				$LossDB_{\text{normal}} \leftarrow LossDB_{\text{normal}} \cup \{\text{Loss}_{0i}\}$;
			}
			\( T_1 \leftarrow \text{AdaptiveThreshold}(LossDB_{\text{normal}}, p_\text{0}) \);
			
			\While{inputting \( x_i \)}{
				\( \text{Loss} \leftarrow M(x_i) \);\\
				$BatchDB \leftarrow BatchDB \cup \{x_i, \text{Loss}\}$;
				
				\If{\(\text{Size}(LossDB_{\text{abnormal}}) < n\)}{
					$X \leftarrow X \cup \{x_i\}$;\\
					\If{\( \text{Loss} < T_1 \)}{
						$LossDB_{\text{normal}} \leftarrow LossDB_{\text{normal}} \cup \{\text{Loss}\}$;\\
						$Y \leftarrow Y \cup \{\text{normal}\}$;\\
						$X_{\text{normal}} \leftarrow X_{\text{normal}} \cup \{x_i\}$;
					}
					\Else{
						$LossDB_{\text{abnormal}} \leftarrow LossDB_{\text{abnormal}} \cup \{\text{Loss}\}$;\\
						$Y \leftarrow Y \cup \{\text{abnormal}\}$;
					}
					\text{continue};
				}
				
				\If{\( \text{Loss} < T_1 \)}{
					$LossDB_{\text{normal}} \leftarrow LossDB_{\text{normal}} \cup \{\text{Loss}\}$;\\
					$X \leftarrow X \cup \{x_i\}$;\\
					$Y \leftarrow Y \cup \{\text{normal}\}$;\\
					$X_{\text{normal}} \leftarrow X_{\text{normal}} \cup \{x_i\}$;
				}
				\ElseIf{\( \text{Loss} > T_2 \)}{
					$LossDB_{\text{abnormal}} \leftarrow LossDB_{\text{abnormal}} \cup \{\text{Loss}\}$;\\
					$X \leftarrow X \cup \{x_i\}$;\\
					$Y \leftarrow Y \cup \{\text{abnormal}\}$;
				}
				\ElseIf{\( T_1 \leq \text{Loss} \leq T_2 \)}{
					\( y_i \leftarrow M^\prime(x_i) \);\\ 
					\If{\( y_i == normal \)}{
						$X_{\text{normal}} \leftarrow X_{\text{normal}} \cup \{x_i\}$;
					}
				}
				\If{\(\text{Size}(BatchDB) == m\)}{
					\( T_1 \leftarrow \text{AdaptiveThreshold}(LossDB_{\text{normal}}, p_\text{0}) \);\\
					\( T_2 \leftarrow \text{AdaptiveThreshold}(LossDB_{\text{abnormal}}, p_\text{1}) \);\\
					Train(\(X_{\text{normal}}, M\)); \\
					Train(\(X, Y, M^\prime\)); \\
					$Empty(X, Y, X_{\text{normal}}, BatchDB)$.
				}
			}
		\end{algorithm}
		
		\begin{algorithm}
			\caption{Adaptive Threshold} \label{alg:AdaptiveThresholdComputation}
			\SetKwInOut{Input}{Input}
			\SetKwInOut{Output}{Output}
			\Input{The database storing the loss of samples: $LossDB$.\\
				Percentile for distribution: $p$.}
			\Output{Threshold T. }
			\BlankLine
			\If{\( \text{Size}(LossDB_{\text{normal}}) > 0 \)}{
				$FitDist \leftarrow \text{FitBestDistribution}(LossDB)$\;
				$\text{T} \leftarrow \text{ComputePercentile}(FitDist, p)$\;
			}
			\Else{
				$\text{T} \leftarrow \emptyset$.
			}
		\end{algorithm}
		
		As depicted in Fig. \ref{fig:onlineLearning}, the proposed online learning framework follows a timeline that incrementally enlarges the training set with high-confidence pseudo-labels and regularly updates the unsupervised deep learning model $M$ with the high-confidence pseudo-labeled dataset. For example, given time series samples from $t=i$ to $t=i+2$, the loss values are first predicted/inferred using an unsupervised model. Based on these loss values and the predefined thresholds $\{T_1, T_2\}$, high-confidence pseudo-labels are assigned to the samples. The samples and their corresponding labels are then used to update both the supervised and unsupervised models. Simultaneously, the thresholds $\{T_1, T_2\}$ are updated with the new loss values. The updated models and thresholds will be used for predicting the samples in the next time period.
		
		The online learning framework is presented in Algorithm~\ref{alg:TrainingAdaptiveNAD}, where $X$ and $Y$ represent high-confidence samples classified by the unsupervised model, $X_{\text{normal}}$ represents the normal samples used to train the unsupervised model, $n$ represents the number of training samples in the initial phase, $LossDB$ is a fixed-size, first-in-first-out (FIFO) queue, and $m$ represents the number of samples used for model updates. Initially, the unsupervised model $M$ with random initialized is trained on a small unlabeled first-round training dataset $X_0$ (Algorithm~\ref{alg:TrainingAdaptiveNAD} line 3). Then, use $M$ to compute and collect the loss of $X_0$, and compute the threshold $T_1$ (Algorithm~\ref{alg:TrainingAdaptiveNAD} lines 4-7). Subsequently, the online training process comprises two stages: The initial phase: when the abnormal loss is insufficient, only $T_1$ is used to generate pseudo-labels (Algorithm~\ref{alg:TrainingAdaptiveNAD} lines 11-20). The second phase: both $T_1$ and $T_2$ are used to generate pseudo-labels (Algorithm~\ref{alg:TrainingAdaptiveNAD} lines 21-33). At regular intervals, $T_1$ and $T_2$ are recalculated and the model $M$ and $M^\prime$ is trained (Algorithm~\ref{alg:TrainingAdaptiveNAD} lines 34-39).
		The Adaptive Threshold Computation algorithm (Algorithm~\ref{alg:AdaptiveThresholdComputation}) computes an adaptive threshold for sample losses. It fits the best distribution to the data (Algorithm~\ref{alg:AdaptiveThresholdComputation} line 2) and computes the percentile of this distribution as the threshold (Algorithm~\ref{alg:AdaptiveThresholdComputation} line 3).
		
		\subsection{Adaptive Threshold Calculation Technique} \label{subsec:ThresholdCalculation}
		When new network samples arrive, the behavioral patterns of network traffic may change. Consequently, the predefined thresholds for classifying network samples (e.g., normal or abnormal) need to be periodically updated. Therefore, a novel dynamic threshold calculation strategy that automatically learns from the ongoing behaviors and adapts the corresponding thresholds to newer coming data.  
		The initial thresholds $T_1$ and $T_2$ are calculated based on a limited dataset
		(e.g., the data from the first day or randomly selected 1\% from the dataset).  
		We use these samples to train the unsupervised learning model. Initially, we use only one threshold $T_1$. Then, in the second round of training, we use the trained unsupervised model to predict the loss values for new samples. These loss values are divided into normal and abnormal samples using the threshold $T_1$. When a sufficient number of abnormal samples are collected (e.g., when the number of abnormal samples reaches 500), we calculate a second threshold $T_2$. From the third round of training onward, we follow the process described in Section \ref{sec:systemDesign} for training and updating.
		In this way, the threshold for the unsupervised model $M$ can be automatically determined for ongoing network anomaly detection. 
		
		We use the network loss databases to determine the thresholds. Two thresholds, $T_1$ and $T_2$, are utilized to identify true positive and true negative network traffic, which can then be used to train the supervised learning model $M^\prime$ during the online phase. The thresholds are selected as follows:
		\begin{enumerate}
			\item \underline{Phase 1:} 
			The Probability-Probability (P-P) plot is used to find the most suitable loss distributions that fit the datasets. 
			We utilize the CIC-Darknet2020 as an example to prove the effectiveness of the proposed threshold calculation technique. To show its generality, we also illustrate details of the threshold calculation on the CIC-DoHBrw-2020 dataset in Appendix A.  
			Fig. \ref{fi:networkN01} illustrates the empirical distribution of the losses obtained from the Adaptive NAD model against the best-fitting theoretical distributions. By comparing several classical loss distributions, the log normal distribution is selected as the optimal distribution to determine the thresholds.
			\item \underline{Phase 2:} MLE is used to predict the parameters of the log normal distribution by maximizing a likelihood function. The estimation of parameters $\mu$ and $\sigma$ using the maximum likelihood technique from a sample of $n$ observations, $X=(x_1,x_2,...,x_n)$ being X the loss observations, is calculated as follows:
			
			\begin{theorem}
				Let $X \sim \text{lognormal}(\mu, \sigma^2)$. The maximum likelihood estimators under $n$ observations of the parameters are:
			\end{theorem}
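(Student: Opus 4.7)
The plan is the standard maximum-likelihood calculus argument applied to the lognormal family: reduce the problem to a normal-model MLE via the change of variable $y_i=\ln x_i$, then differentiate the log-likelihood and solve the score equations. First I would recall the density $f(x\mid \mu,\sigma^{2}) = \frac{1}{x\sigma\sqrt{2\pi}}\exp\!\bigl(-\frac{(\ln x-\mu)^{2}}{2\sigma^{2}}\bigr)$ for $x>0$, and write the joint likelihood of the iid sample $X=(x_{1},\ldots,x_{n})$ as $L(\mu,\sigma^{2})=\prod_{i=1}^{n} f(x_{i}\mid \mu,\sigma^{2})$. Taking logs gives
\begin{equation}
\ell(\mu,\sigma^{2}) = -\sum_{i=1}^{n}\ln x_{i} - \frac{n}{2}\ln(2\pi) - \frac{n}{2}\ln\sigma^{2} - \frac{1}{2\sigma^{2}}\sum_{i=1}^{n}(\ln x_{i}-\mu)^{2}.
\end{equation}

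Next I would compute the score vector. Differentiating in $\mu$ yields $\partial_{\mu}\ell = \sigma^{-2}\sum_{i}(\ln x_{i}-\mu)$, which vanishes exactly when $\hat{\mu}=\frac{1}{n}\sum_{i=1}^{n}\ln x_{i}$. Differentiating in $\sigma^{2}$ (treating it as a single parameter to keep algebra clean) gives $\partial_{\sigma^{2}}\ell = -\frac{n}{2\sigma^{2}} + \frac{1}{2\sigma^{4}}\sum_{i}(\ln x_{i}-\mu)^{2}$; substituting $\hat{\mu}$ and setting this equal to zero produces $\hat{\sigma}^{2} = \frac{1}{n}\sum_{i=1}^{n}(\ln x_{i}-\hat{\mu})^{2}$. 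The $-\sum_{i}\ln x_{i}$ term is constant in the parameters and drops out, which is the reason the lognormal MLEs coincide with the normal MLEs applied to the log-transformed data.

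Finally, I would check that this critical point is indeed a maximum rather than a saddle or minimum. The cleanest route is to observe that for fixed $\sigma^{2}$ the log-likelihood is a concave quadratic in $\mu$, so $\hat{\mu}$ is a global maximizer in $\mu$; then substituting $\hat{\mu}$ into $\ell$ leaves a profile log-likelihood in $\sigma^{2}$ of the form $-\tfrac{n}{2}\ln\sigma^{2} - \tfrac{c}{2\sigma^{2}}$ with $c=\sum_{i}(\ln x_{i}-\hat{\mu})^{2}\ge 0$, whose unique stationary point $\sigma^{2}=c/n$ is easily seen to be a maximum by inspecting the sign of the derivative on either side (or equivalently by computing the Hessian and noting it is negative definite at $(\hat{\mu},\hat{\sigma}^{2})$).

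The calculation itself is routine; the only mildly delicate point I anticipate is handling the boundary behavior in $\sigma^{2}$, since the profile likelihood diverges to $-\infty$ as $\sigma^{2}\to 0^{+}$ and as $\sigma^{2}\to\infty$, which is what rules out degenerate maximizers and confirms that the interior stationary point is the global MLE (assuming $c>0$, i.e.\ the $\ln x_{i}$ are not all identical, which is the generic case for the loss observations used in the threshold computation).
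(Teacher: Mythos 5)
Your proof is correct and follows essentially the same route as the paper: write the lognormal likelihood, take logs, and solve the score equations for $\mu$ and $\sigma^2$. In fact your version is the more reliable one --- the paper's displayed derivative with respect to $\sigma^2$ consists of two identical terms that cancel to zero, and its stated estimator $\hat{\sigma}^2=\frac{1}{n}\sum_j(\ln x_j-\hat{\mu})$ omits the square on the deviation, whereas your $\hat{\sigma}^2=\frac{1}{n}\sum_i(\ln x_i-\hat{\mu})^2$ is the correct formula; your second-order and boundary check is also something the paper skips entirely.
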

			
			\begin{itemize}
				\item[(i)] \(
				\hat{\mu} = \frac{\sum_j \ln(x_j)}{n}
				\)
				\item[(ii)] \(
				\hat{\sigma}^2 = \frac{\sum_j (\ln(x_j) - \hat{\mu})}{n}
				\)
			\end{itemize}

			\begin{proof}\renewcommand{\qedsymbol}{}
				\begin{equation}
					\begin{split}
						\label{eq:mle}
						L(\mu,\sigma^2|X) 
						&= \prod_{j = 1}^{n}[f(x_j|\mu,\sigma^2)], \\
						&={(2\pi\sigma^2)}^{-n/2} \prod_j {x_j}^{-1} \\
						&exp(\sum_j-\frac{(log(x_j)-\mu)^2}{2\sigma^2}). \nonumber
					\end{split}
				\end{equation}
				Applying it to Eq.~\eqref{eq:mle} to have the log likelihood function $l$, which is calculated as:
				\begin{align}
					&l(\mu,\sigma^2|X) \nonumber \\
					& =-\frac{n}{2}\ln(2\pi\sigma^2)
					-\sum_j \ln(x_j)-\frac{\sum_j \ln(x_j)^2}{2\sigma^2} \nonumber \\
					&\ \ \ \  +\frac{\sum_j \ln(x_j)\mu}{\sigma^2}
					-\frac{n\mu^2}{2\sigma^2}.\nonumber
				\end{align}
				
				Next, we maximize $l$ by applying the derivative in terms of $\mu$ and $\sigma$. $\hat{\mu}$ value is obtained by:
				
				\begin{equation}
					\frac{\partial l}{\partial \mu}=\frac{\sum_j \ln(x_j)}{\sigma^2}-\frac{n\hat{\mu}}{\sigma^2}=0,\nonumber
				\end{equation}
				\begin{equation}
					\hat{\mu}=\frac{\sum_j \ln(x_j)}{n}.\nonumber
				\end{equation}
				The value of $\hat{\sigma}$ is calculated using the following:
				\begin{align}
					\label{eq:sigma}
					\frac{\partial l}{\partial \sigma^2} =-\frac{\sum_j(\ln(x_j)-\hat{\mu})^2}{2(\hat{\sigma}^2)^2}+\frac{\sum_j(\ln(x_j)-\hat{\mu})^2}{2(\hat{\sigma}^2)^2}. \nonumber
				\end{align}
				Then, letting $\frac{\partial l}{\partial \sigma^2}=0$, we have $\hat{\sigma}^2$ as:
				\begin{equation}
					\hat{\sigma}^2=\frac{\sum_j(\ln(x_j)-\hat{\mu})}{n}. \nonumber
				\end{equation}
			\end{proof}
			
			\item \underline{Phase 3:}
			
			\begin{figure}[t]
				\centering
				\begin{minipage}{.24\textwidth}
					\scalebox{1.}{\includegraphics[width=1.\linewidth]{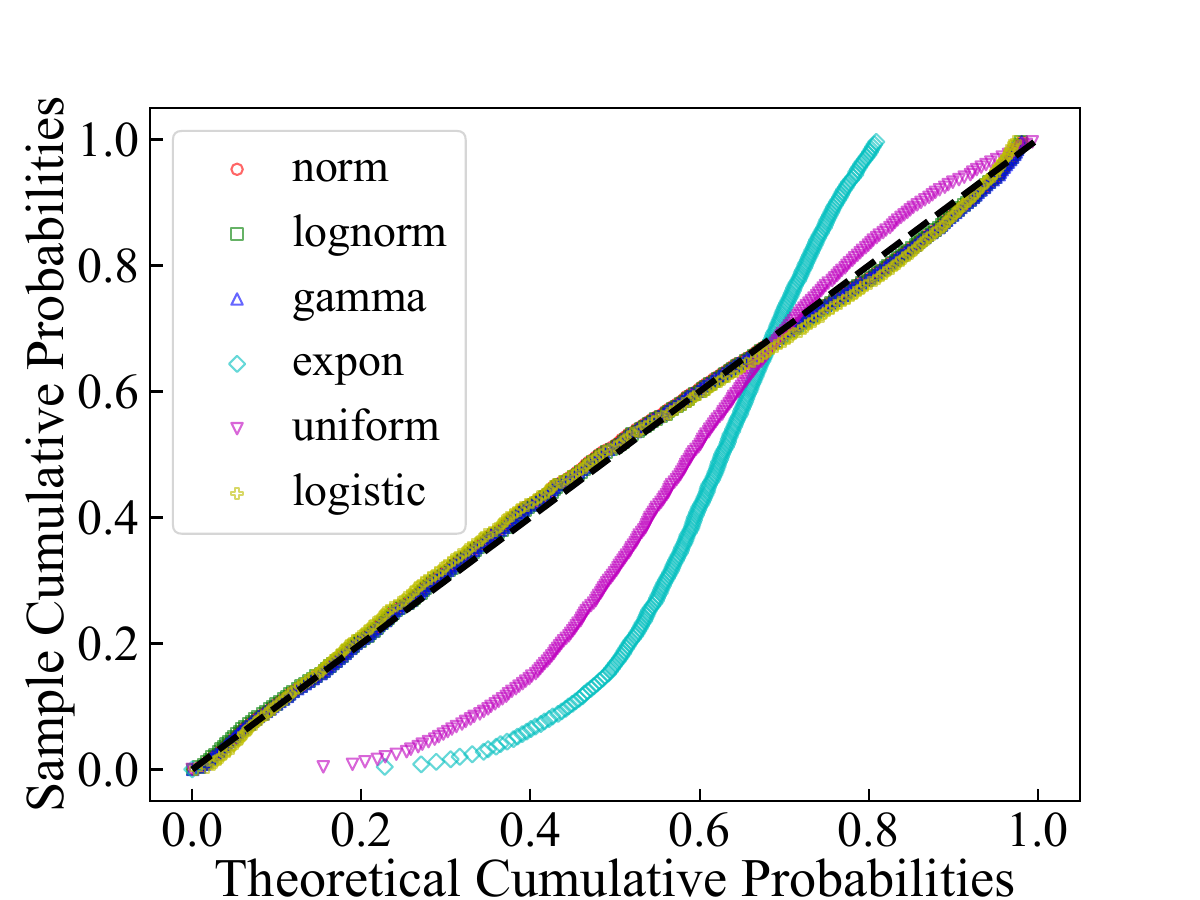}} \\
					\vspace{-1mm}
					(a) Normal Distribution.
				\end{minipage}%
				\begin{minipage}{.24\textwidth}
					\scalebox{1}{\includegraphics[width=1\linewidth]{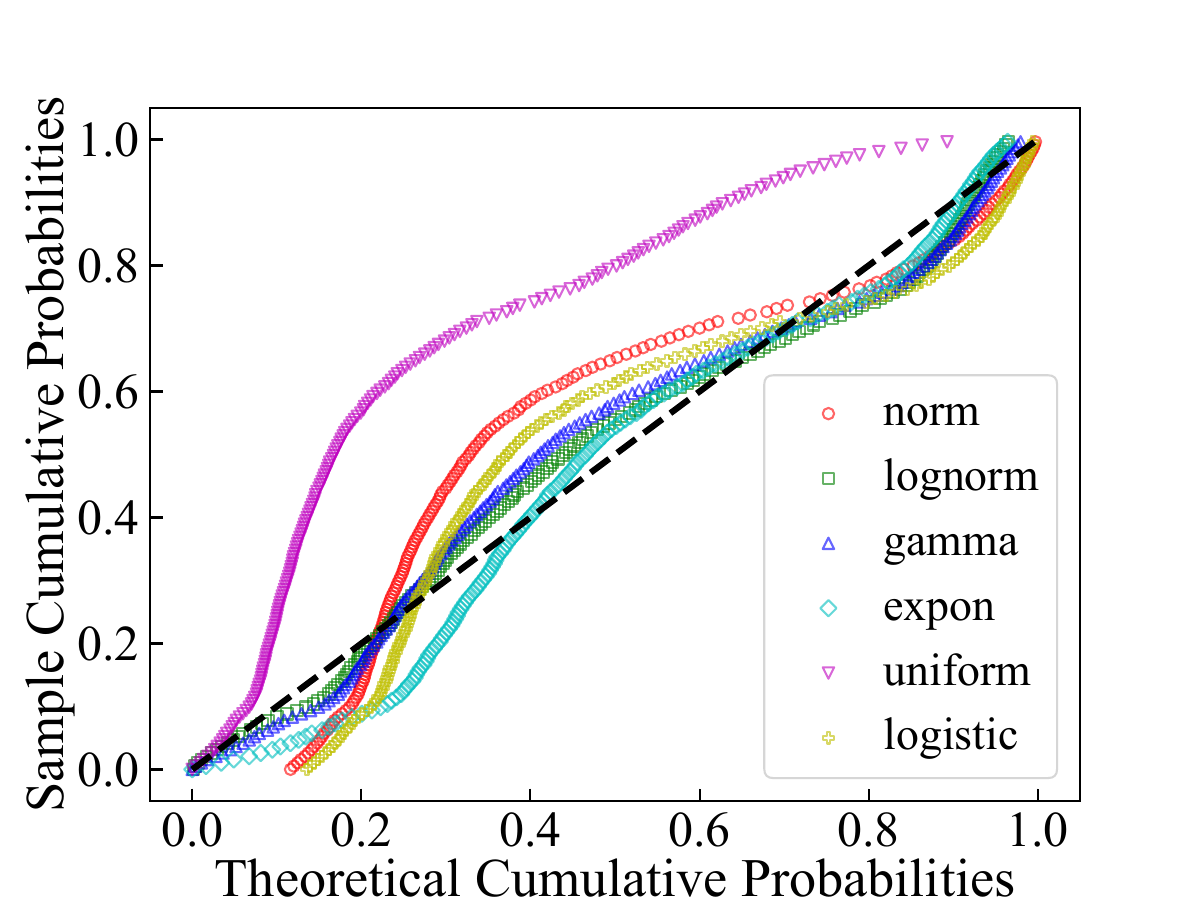}} \\
					\vspace{-1mm}
					(b) Abnormal Distribution.
				\end{minipage}%
				\caption{Loss distributions of normal and abnormal network traffic for CIC-Darknet2020.}
				\label{fi:networkN01}
				\vspace{-2mm}
			\end{figure}
			
			\begin{figure}[t]
				\centering
				\begin{minipage}{.24\textwidth}
					\centering
					\scalebox{.23}{\includegraphics{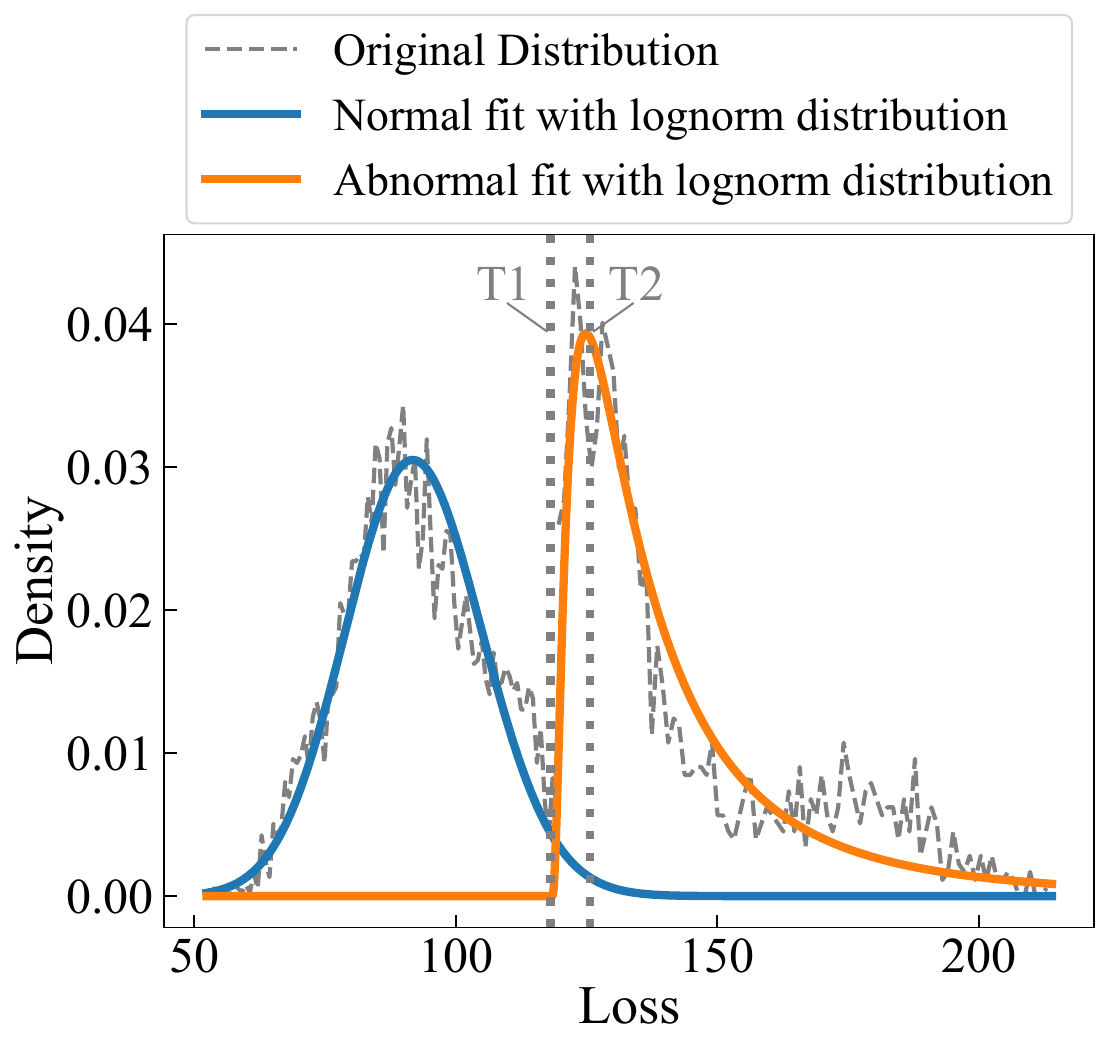}} \\
					(a) Network Distributions.
				\end{minipage}%
				\begin{minipage}{.24\textwidth}
					\centering
					\scalebox{.21}{\includegraphics{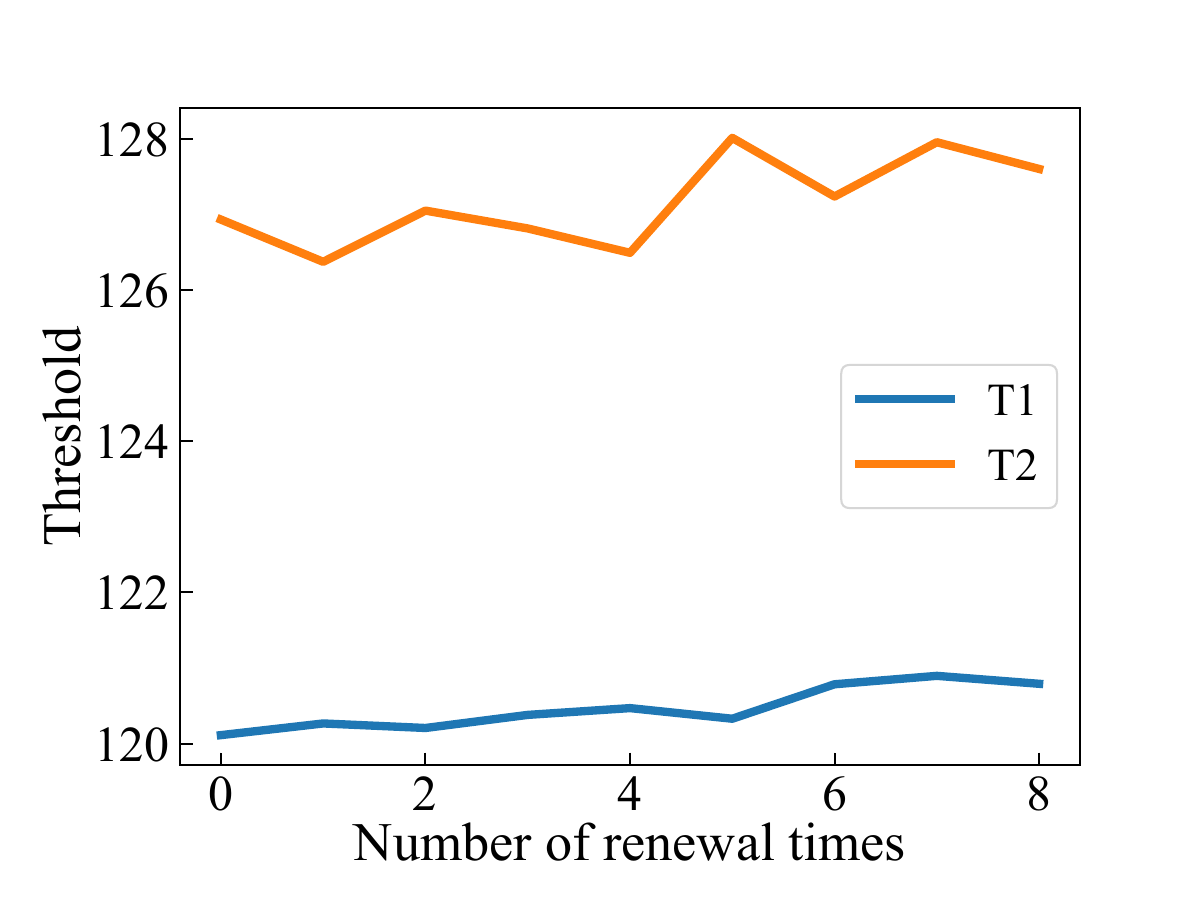}} \\
					(b) Threshold update.
				\end{minipage}
				\caption{Adaptive threshold calculation on CIC-Darknet2020.}
				\label{fi:networkN02}
				\vspace{-2mm}
			\end{figure}

			The threshold $T_i$ is obtained by Proposition \ref{prop:threshold}. 
			\begin{prop}
				\label{prop:threshold}
				Let $T_1$ and $T_2$ be the thresholds for network traffic, and $p_1$ and $p_2$ are the percentiles for normal and abnormal loss distributions related to $T_1$ and $T_2$. If the normal and abnormal loss follows a log-normal distribution with parameters $\mu_i$ and $\sigma_i$, being $i=1$ for normal and $i=2$ for abnormal traffic, then $T_1$ and $T_2$ are calculated as follows with a ($1-\alpha$) confidence level ($0<\alpha<1$):
				\begin{equation}
					T_1 = exp(\Phi^{-1}(p_1)\sigma_1+\mu_1), \nonumber
					\label{eq:lognormal}
				\end{equation}
				\begin{equation}
					T_2 = exp(\Phi^{-1}(1-p_2)\sigma_2+\mu_2) .\nonumber
				\end{equation}
			\end{prop}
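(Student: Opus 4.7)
The plan is to derive the two threshold formulas directly from the cumulative distribution function of the log normal distribution, using the fact that the previous theorem already supplies the maximum likelihood estimators $\hat{\mu}_i$ and $\hat{\sigma}_i^2$ that parametrize the normal and abnormal loss distributions. The whole proposition is essentially a quantile computation dressed up in the language of confidence levels, so the main work is to make the quantile-versus-confidence correspondence explicit.

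First I would recall that if $X \sim \text{lognormal}(\mu,\sigma^2)$, then $\ln X \sim \mathcal{N}(\mu,\sigma^2)$, and therefore the CDF is
\begin{equation}
F_X(t) \;=\; \Pr(X \le t) \;=\; \Phi\!\left(\frac{\ln t - \mu}{\sigma}\right),\qquad t>0. \nonumber
\end{equation}
From here the derivation of $T_1$ is immediate: by the definition of a percentile, $T_1$ is the value such that $F_{X_1}(T_1)=p_1$ for the normal loss distribution $X_1\sim\text{lognormal}(\mu_1,\sigma_1^2)$. Inverting the CDF with the standard-normal inverse $\Phi^{-1}$ gives $(\ln T_1-\mu_1)/\sigma_1=\Phi^{-1}(p_1)$, and exponentiating yields the stated formula.

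For $T_2$, the only subtlety is the orientation of the tail. Since an abnormal sample is declared whenever its loss exceeds $T_2$, the relevant event is the upper tail $\Pr(X_2 \ge T_2)=p_2$ rather than $\Pr(X_2 \le T_2)=p_2$. Using $F_{X_2}(T_2)=1-p_2$ and inverting gives $T_2 = \exp(\Phi^{-1}(1-p_2)\sigma_2+\mu_2)$, matching the statement. Finally, I would tie the $(1-\alpha)$ confidence level to the percentile parameters by noting that choosing $p_1$ and $p_2$ so that the combined probability of misclassifying a normal or abnormal sample is at most $\alpha$ (for instance, $p_1$ small and $p_2$ small) produces the desired confidence guarantee, so the two formulas together carve out the high-confidence regions used by Algorithm~\ref{alg:AdaptiveThresholdComputation}.

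The hard part, such as it is, is less the algebra and more the bookkeeping around what $p_1$ and $p_2$ mean relative to the two sides of the decision boundary: one is a lower-tail percentile of the normal-loss distribution and the other an upper-tail percentile of the abnormal-loss distribution. Once that convention is fixed, the proof reduces to two lines of inversion of $\Phi$. I would therefore spend the body of the proof carefully stating this convention, then apply the CDF inversion once for each threshold, and conclude by observing that the plug-in MLEs $\hat{\mu}_i,\hat{\sigma}_i$ from the preceding theorem make the formulas fully data-driven and thus directly usable inside the online loop.
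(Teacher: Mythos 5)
Your proposal is correct and follows essentially the same route as the paper's proof: fix the convention $P(X_1\le T_1)=p_1$ for the normal losses and $P(X_2\ge T_2)=p_2$ for the abnormal losses, write each probability via the lognormal CDF $\Phi\bigl((\ln t-\mu_i)/\sigma_i\bigr)$, and invert $\Phi$ to obtain the two exponential formulas. Your added remarks on the $\ln X\sim\mathcal{N}(\mu,\sigma^2)$ reduction and on linking $p_1,p_2$ to the $(1-\alpha)$ confidence level are consistent with, and slightly more explicit than, what the paper states.
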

			
			\begin{proof}\renewcommand{\qedsymbol}{}
				Let $X_1$ and $X_2$ be the loss of normal observations and loss of abnormal observations with cumulative distribution functions $F_1$ and $F_2$, respectively, formulated as follows:
				\begin{equation}
					F_1(T_1)=P(X_1\leq T_1)= p_1,\nonumber
				\end{equation}
				\begin{equation}
					F_2(T_2)=P(X_2\geq T_2)= p_2.\nonumber
				\end{equation}
				
				By assumption, $F_{i}(T_i)$ follows a log-normal distribution that is defined as follows:
				\begin{align}
					F_{1}(T_1)&=P(X_1\leq T_1)\nonumber\\&=\int_{0}^{T_1}f_1(x_1)dx_1=\Phi\left(\frac{log(T_1)-\mu_1}{\sigma_1}\right),\nonumber
				\end{align}
				\begin{align}
					F_{2}(T_2)&=1-P(X_2\leq T_2)\nonumber\\&=1-\int_{0}^{T_2}f_2(x_2)dx_2=1-\Phi\left(\frac{log(T_2)-\mu_2}{\sigma_2}\right),\nonumber
				\end{align}
				where $\Phi$ is the cumulative distribution function of the standard normal distribution, i.e., $N(0,1)$.  $f_1(x_1)$ and $f_2(x_2)$ are the probability density functions of the log-normal distribution, which is calculated by:
				\begin{align}
					f_1(x_1)= f_2(x_2)=\frac{1}{x\sqrt{2\pi}\sigma}exp(-\frac{(log(x)-\mu)^2}{2\sigma^2}).\nonumber
				\end{align}
				Based on the above equations, the threshold $T_{1}$ and $T_{2}$ are calculated as follows:
				\begin{equation}
					p_1 = \Phi\left(\frac{log(T_1)-\mu_1}{\sigma_1}\right),\nonumber
				\end{equation}
				\begin{equation}
					p_2 = 1-\Phi\left(\frac{log(T_2)-\mu_2}{\sigma_2}\right),\nonumber
				\end{equation}
				\begin{equation}
					\Phi^{-1}(p_1) =\frac{log(T_1)-\mu_1}{\sigma_1},\nonumber
				\end{equation}
				\begin{equation}
					\Phi^{-1}(1-p_2) =\frac{log(T_2)-\mu_2}{\sigma_2},\nonumber
				\end{equation}
				\begin{equation}
					T_1 = exp(\Phi^{-1}(p_1)\sigma_1+\mu_1),\nonumber
				\end{equation}
				\begin{equation}
					T_2 = exp(\Phi^{-1}(1-p_2)\sigma_2+\mu_2).\nonumber
				\end{equation}
			\end{proof}
		\end{enumerate}
		In our model, the thresholds $T_1$ and $T_2$ are determined based on the quantiles of the best-fit statistical distributions for normal and abnormal losses, respectively. Specifically, $T_1$ should be set at a high percentile of the best-fit distribution for normal losses, while $T_2$ should be set at a low percentile of the best-fit distribution for abnormal losses. Selecting a high percentile for $T_1$ ensures that most normal samples, which typically have lower loss values, fall below it, thereby reducing false positives. Conversely, setting $T_2$ at the low percentile of the best-fit distribution for abnormal losses ensures that most abnormal samples, which typically have higher loss values, exceed it, minimizing false negatives.
		
		The description of renewed thresholds of the proposed adaptive NAD model for both $T_1$ and $T_2$ over 8 renewal times is given in Fig. \ref{fi:networkN02}(b),
		where up to 5000 normal and 5000 abnormal loss values are retained in every renewal time and the old ones are discarded as new ones are added, ensuring that the most recent samples are always maintained. This approach allows the model to use the most up-to-date data for accurately determining the thresholds $T_1$ and $T_2$. 
		We observe that the recent characteristics of network traffic determine threshold variations, and $M$'s loss value is from 50 to 230. In addition, the thresholds for $T_1$ and $T_2$ range from 120 - 122 and 126 - 128, respectively. 
		
		\subsection{Online Inferencing}
		As shown in Fig. \ref{fi:framework}, the online inferencing refers to the usage of the well-trained network anomaly detector (i.e., the trained supervised machine learning model $M^\prime$) from the online training for real-time network anomaly detection. 
		In our setting, the online training is conducted continuously as streaming data arrive, and the online inference always uses the up-to-date trained supervised machine learning model $M^\prime$ for real-time network anomaly prediction. 
		
		\subsection{Complexity and storage analysis}
		\textbf{Model complexity.}
		The model complexity of the online training of Adaptive NAD is determined by the most complex component of the model. For instance, in this work, Adaptive NAD uses the threshold computation technique and two models, including LSTM-VAE (with an input sequence length of 5, hidden size of 64, latent size of 32, and a single layer) and RF. The LSTM-VAE is the most complex model, with 0.26M FLOPs and 58,207 parameters, which defines the model complexity of Adaptive NAD.
		\textcolor{black}{The model complexity of the online inferencing of Adaptive NAD is decided by the used supervised machine learning model $M^\prime$, for example, if RF is used as $M^\prime$, then its time complexity for inferencing is $O(k \times d)$ and its space complexity if $O(k\times 2^d)$, where $k$ is the number of trees and $d$ is the maximum depth of each tree.}
		
		\textbf{Storage cost.} As for the space complexity, the primary cost includes two loss databases with fixed-size (e.g., 5000 in this work), FIFO queues, a temporary database from each batch that is periodically cleared, and all anomalies to assist security analysis for network operators.
		
		\section{Evaluation} \label{sec:Evaluation}
		This section presents the experimental setup and comprehensive results. 
		We first detail the datasets in Section~\ref{subsec:dataset}, 
		baselines in Section~\ref{subsec:baselines}, 
		evaluation metrics in Section~\ref{subsec:metrics}, 
		and implementation details in Section~\ref{subsec:impl}. 
		Subsequently, we analyze the overall performance in Section~\ref{subsec:overall performance} 
		and discuss the ablation studies in Section~\ref{subsec:ablation}.

		\subsection{Experimental Setup}
		\subsubsection{Datasets}
		\label{subsec:dataset}
		To comprehensively evaluate Adaptive NAD across diverse network security scenarios, we conduct experiments on three widely-used benchmark datasets from different network environments: CIC-Darknet2020~\cite{habibi2020didarknet} (encrypted Tor traffic), NSL-KDD~\cite{tavallaee2009detailed} (general network intrusion), and Edge-IIoTset~\cite{mbc1-1h68-22} (IoT/IIoT security). This combination ensures our evaluation spans different traffic patterns, 
		attack types, and network architectures.
		
		\textbf{CIC-Darknet2020.} 
		The CIC-Darknet2020~\cite{habibi2020didarknet} is a two-layered dataset (ISCXTor2017 and ISCXVPN2016) capturing both VPN and Tor encrypted traffic from Darknet's hidden services. We evaluate Adaptive NAD on the Tor and non-Tor binary classification task, which contains 93,309 normal and 1,392 abnormal samples collected over 13 days with 81 features across 8 network types, exhibiting a highly imbalanced distribution with a ratio of 67:1.
		
		\textbf{NSL-KDD.} 
		Following the experimental setup in~\cite{AOCIDS}, we use the 
		NSL-KDD~\cite{tavallaee2009detailed} dataset, a standard benchmark for 
		intrusion detection. The dataset comprises 148,517 
		samples with 121 features after preprocessing, covering four attack categories (DoS, U2R, R2L, and Probing) with 62 attack types.
		
		\textbf{Edge-IIoTset.} 
		The Edge-IIoTset~\cite{mbc1-1h68-22} is specifically designed for evaluating 
		intrusion detection in IoT and IIoT environments. It includes 14 attack types 
		related to IoT connectivity protocols, with 2,154,864 benign and 1,076,200 
		attack samples extracted into 81 features, resulting in an imbalance ratio of 2:1.
		
		\subsubsection{Baselines}
		\label{subsec:baselines}
		We compare Adaptive NAD with nine baseline methods in the same 
		online learning setting.
		
		\textbf{Classical Unsupervised Methods.}
		We adapt four widely-used anomaly detection algorithms using the River library: 
		Half-Space Trees (HST)~\cite{tan2011fast} for tree-based online outlier scoring, 
		KMeans clustering~\cite{kanungo2002efficient} that treats distant points as anomalies, 
		One-Class SVM (OCSVM)~\cite{aguayo2018novelty} that learns a boundary around normal data, 
		and Self-Organizing Maps (SOM)~\cite{aguayo2018novelty} that detect anomalies via neuron distance.
		
		\textbf{Deep Learning Baselines.}
		We compare against five state-of-the-art methods:
		\begin{itemize}
			\item \textbf{AE\_LOF}~\cite{odiathevar2021online}: Employs a two-stage filtering mechanism 
			where samples first pass through reconstruction error (RE) thresholding, 
			and borderline cases are further verified by Local Outlier Factor (LOF) 
			in the latent space, with online threshold adaptation via Mann-Whitney test.
			
			\item \textbf{AOC-IDS}~\cite{AOCIDS}: Leverages contrastive learning 
			with Cluster-Repelling Contrastive (CRC) loss and dual-view decision-making 
			(encoder/decoder feature spaces) via Gaussian mixture fitting, 
			supporting pseudo-label-driven online adaptation with random label flipping.
			
			\item \textbf{DAGMM}~\cite{zong2018deep}: Jointly optimizes deep autoencoding 
			and Gaussian Mixture Model (GMM) estimation in an end-to-end framework, 
			using both reconstruction features and latent representations for density-based scoring.
			
			\item \textbf{GDN}~\cite{deng2021graph}: Employs graph neural networks 
			to model inter-feature dependencies in multivariate time series, 
			detecting anomalies via learned graph-structured deviation patterns.
			\item \textbf{DeepAID}~\cite{han2021deepaid}: Utilizes deep attention mechanisms 
			to capture temporal dependencies in network traffic sequences, 
			combining LSTM encoder with attention-weighted reconstruction error 
			for enhanced anomaly detection in streaming data.
		\end{itemize}
		
		\textbf{Implementation of Baselines.}
		All baseline methods are evaluated under identical conditions with 
		the same preprocessing pipeline, online learning configuration, and 
		hardware environment to ensure fair comparison. Classical methods 
		(HST, KMeans, OCSVM, SOM) are implemented using the River library~\cite{montiel2021river}. 
		Deep learning baselines use the architectures reported in their original papers: 
		GDN (2 layers), DAGMM (4 layers), AOC-IDS (4 layers), DeepFT (6 layers), 
		DeepAID (2 layers), and OOF (4 layers). 
		
		\subsubsection{Evaluation Metrics}
		\label{subsec:metrics}
		We use seven metrics to comprehensively evaluate model performance: 
		Accuracy (Acc.), Precision-macro (Pre.), Recall-macro (Rec.), 
		F1-score-macro (F1), False Alarm Rate (FAR), Missed Detection Rate (MDR), 
		and Area Under the ROC Curve (AUC).
		
		FAR (equivalent to the False Positive Rate) measures the proportion of normal 
		traffic wrongly classified as abnormal, while MDR (equivalent to the False Negative Rate) 
		indicates the proportion of anomalous traffic missed by the model. 
		AUC summarizes the trade-off between true positive rate and false positive rate 
		across all possible decision thresholds, providing an overall measure of 
		discrimination ability. Higher values of Acc., Pre., Rec., F1, and AUC indicate 
		better performance, while lower FAR and MDR are preferred.
		
		\subsubsection{Implementation Details}
		\label{subsec:impl}
		We describe the data preprocessing, online learning configuration, 
		model architecture, and experimental environment used in our experiments.
		
		\textbf{Data Preprocessing.}
		Our preprocessing pipeline is straightforward and avoids complex feature 
		engineering. It includes three steps. First, we remove metadata columns 
		(Flow ID, Timestamp, and Attack Name where applicable) and convert IP 
		addresses to integer representations. Second, we replace infinite values 
		with NaN and remove entries with missing values. Third, we apply feature 
		normalization using dataset-specific scalers: MinMaxScaler with range 
		$[-1, 1]$ for CIC-Darknet2020 and NSL-KDD, and RobustScaler 
		for Edge-IIoTset. Following these steps, CIC-Darknet2020 has 81 
		features, NSL-KDD has 121 features (with categorical variables 
		already one-hot encoded), and Edge-IIoTset has 81 features. 
		These preprocessing steps are applied consistently across all comparative 
		methods to ensure fair evaluation.
		
		\textbf{Online Learning Configuration.}
		The experiments are conducted in an online learning setting, where the 
		system is initialized with a limited proportion of labeled normal samples 
		and then autonomously adapts to incoming data streams. Specifically, we 
		allocate 5\% of normal data (610 samples) for CIC-Darknet2020, 1\% (673 
		samples) for NSL-KDD, and 1\% (32,310 samples) for Edge-IIoTset as the 
		pre-training set, while the remaining samples form the online streaming 
		set for continuous evaluation and incremental learning. The model undergoes 
		periodic updates every 1,000 samples, with initial training running for 
		500 epochs (early stopping patience=10) and each subsequent update for 10 epochs.
		
		\textbf{Model Architecture and Hyperparameters.}
		The LSTM-VAE architecture consists of 2 LSTM layers with 128 hidden units 
		and a latent dimension of 64 across all datasets. The decoder includes an 
		additional fully connected layer for reconstruction. We use the Adam optimizer 
		with a learning rate of 0.001 and a batch size of 128 for VAE training. 
		The Random Forest classifier is implemented using the River library~\cite{montiel2021river} 
		with 100 trees, enabling incremental learning through one-sample-at-a-time updates. 
		\textcolor{black}{Adaptive thresholds $T_1$ and $T_2$ are initially computed from the 5th and 95th 
			percentiles of the normal loss distribution during pre-training.} Once 500 
		abnormal samples are accumulated in the online phase, the system transitions 
		to dual-distribution fitting based on both normal and abnormal loss distributions.
		
		\textbf{Experimental Environment.}
		All experiments are conducted on a Linux server with an AMD EPYC 
		7282 16-Core Processor at 2.80 GHz and 377 GB of RAM. We implement 
		Adaptive NAD in Python using PyTorch and Scikit-learn libraries. 
		The reported results are averaged over 10 independent runs to ensure 
		statistical reliability.
		
		\subsection{Overall Performance}
		\label{subsec:overall performance}
		We compare AdaptiveNAD against four classical unsupervised methods 
		and five deep learning baselines under the same online learning setting 
		with limited initial labeled data. Tables~\ref{tab:classical_darknet}--\ref{tab:classical_iiotset} 
		present the comprehensive results across all three datasets.
		
		\begin{table}[t]
			\caption{Comparison with classical unsupervised methods on CIC-Darknet2020 (\%).}
			\label{tab:classical_darknet}
			\centering
			\setlength\tabcolsep{1pt}
			\begin{tabular}{ccccccccc}
				\toprule
				Method & Acc. & Pre. & Rec. & F1 & FAR & MDR & AUC & Lat.($\mu$s) \\
				\midrule
				HST & 96.45 & 67.42 & 81.02 & 63.93 & 2.72 & 35.24 & 81.02 & 159.03 \\
				KMeans & 25.16 & 50.99 & 61.71 & 20.84 & 75.99 & \textbf{0.59} & 61.71 & 1470.90 \\
				OCSVM & 98.19 & 50.46 & 51.38 & 50.64 & \textbf{0.29} & 96.95 & 51.38 & 7196.60 \\
				SOM & 76.96 & 54.25 & 87.88 & 49.31 & 23.09 & 1.15 & 87.88 & \textbf{34.19} \\
				AdaptiveNAD & \textbf{98.24} & \textbf{75.51} & \textbf{92.12} & \textbf{76.74} & 1.33 & 12.21 & \textbf{93.00} & 260.33 \\
				\bottomrule
			\end{tabular}
		\end{table}
		
		\begin{table}[t]
			\caption{Comparison with classical unsupervised methods on NSL-KDD (\%).}
			\label{tab:classical_nslkdd}
			\centering
			\setlength\tabcolsep{1pt}
			\begin{tabular}{ccccccccc}
				\toprule
				Method & Acc. & Pre. & Rec. & F1 & FAR & MDR & AUC & Lat.($\mu$s) \\
				\midrule
				HST & 80.84 & 86.02 & 80.20 & 79.87 & \textbf{0.66} & 38.95 & 80.20 & 159.48 \\
				KMeans & 75.36 & 81.72 & 75.89 & 74.21 & 46.22 & \textbf{2.00} & 75.89 & 2266.70 \\
				OCSVM & 50.69 & 25.86 & 49.96 & 33.79 & 28.67 & 71.41 & 49.96 & 10979.80 \\
				SOM & 86.22 & 88.09 & 86.60 & 86.12 & 24.56 & 2.24 & 86.60 & \textbf{37.05} \\
				AdaptiveNAD & \textbf{87.56} & \textbf{89.64} & \textbf{87.27} & \textbf{87.20} & 0.71 & 24.74 & \textbf{87.27} & 251.97 \\
				\bottomrule
			\end{tabular}
		\end{table}
		
		\begin{table}[t]
			\caption{Comparison with classical unsupervised methods on Edge-IIoTset (\%).}
			\label{tab:classical_iiotset}
			\centering
			\setlength\tabcolsep{1pt}
			\begin{tabular}{ccccccccc}
				\toprule
				Method & Acc. & Pre. & Rec. & F1 & FAR & MDR & AUC & Lat.($\mu$s) \\
				\midrule
				HST & 45.24 & 38.86 & 38.78 & 38.68 & 41.48 & 80.91 & 38.75 & 137.47 \\
				KMeans & 79.04 & 80.48 & 83.89 & 78.68 & 31.01 & 1.14 & 83.87 & 1488.88 \\
				OCSVM & 33.75 & 17.10 & 49.92 & 25.27 & 99.57 & \textbf{0.54} & 49.92 & 7431.43 \\
				SOM & 89.94 & 89.25 & 89.19 & 88.95 & 8.57 & 12.98 & 89.17 & \textbf{36.96} \\
				AdaptiveNAD & \textbf{93.71} & \textbf{95.59} & \textbf{90.70} & \textbf{92.55} & \textbf{0.08} & 18.52 & \textbf{90.70} & 260.33 \\
				\bottomrule
			\end{tabular}
		\end{table}
		
		\subsubsection{Comparison with Classical Unsupervised Methods}
		
		\textbf{Overall Performance.} AdaptiveNAD consistently outperforms all classical baselines across the three evaluated datasets. specifically on CIC-Darknet2020, our method achieves an accuracy of 98.24\% and an F1-score of 76.74\%, surpassing the best-performing baseline, HST, by margins of 1.79\% and 12.81\%, respectively. This performance advantage extends to NSL-KDD and Edge-IIoTset: on NSL-KDD, AdaptiveNAD attains 87.56\% accuracy (1.34\% higher than SOM) and an 87.20\% F1-score; on Edge-IIoTset, it reaches 93.71\% accuracy (3.77\% higher than SOM) with a 92.55\% F1-score. The consistent superiority across these diverse network environments—ranging from encrypted Tor traffic to general network intrusions and IoT/IIoT scenarios—underscores the robustness and versatility of our dual-threshold adaptive mechanism.
		
		\textbf{The FAR-MDR Trade-off.} Classical methods exhibit a severe imbalance between false alarm rate (FAR) and missed detection rate (MDR), often optimizing one metric at the expense of the other. OCSVM demonstrates extreme conservatism on CIC-Darknet2020, achieving a low FAR of 0.29\% but an excessively high MDR of 96.95\%, failing to detect the vast majority of anomalies. Conversely, it becomes overly aggressive on Edge-IIoTset with a FAR of 99.57\% and an MDR of 0.54\%, misclassifying nearly all normal traffic as anomalous. KMeans shows the opposite extreme: it achieves minimal MDR (0.59\%--1.14\%) by over-flagging samples, resulting in a prohibitive FAR (31.01\%--75.99\%) that is impractical for real-world applications. HST also lacks stability; although it maintains a low FAR on NSL-KDD (0.66\%), its FAR rises to 41.48\% on Edge-IIoTset, and its MDR remains high (35.24\%--80.91\%) across all datasets. In contrast, AdaptiveNAD balances this trade-off effectively, maintaining a consistently low FAR (0.08\%--1.33\%) alongside a significantly reduced MDR (12.21\%--24.74\%). This confirms that our dual-threshold mechanism successfully navigates the sensitivity-specificity trade-off across diverse traffic distributions.
		
		\textbf{Efficiency vs. Effectiveness.} Tables~\ref{tab:classical_darknet}--\ref{tab:classical_iiotset} also incorporate the inference latency comparison across all datasets. To ensure practical relevance, all measurements are conducted under a strict single-sample inference paradigm. This approach accurately mirrors real-world streaming environments, as accumulating network flows for batch inference introduces latency overheads that are prohibitive for timely threat mitigation. While SOM achieves the lowest latency ($\approx$34--37 $\mu$s per sample), its F1-scores range from 49.31\% to 88.95\%, substantially lower than AdaptiveNAD's consistent performance (76.74\%--92.55\%). On the other extreme, OCSVM and KMeans incur prohibitive computational costs: OCSVM requires 7,196--10,979 $\mu$s per sample, and KMeans demands 1,470--2,266 $\mu$s, rendering them impractical for high-throughput traffic streams despite their varied detection capabilities. AdaptiveNAD maintains a competitive inference speed of approximately 250--260 $\mu$s per sample, which is 27--43$\times$ faster than OCSVM while achieving superior F1-scores and AUC (90.70\%--93.00\%). This efficiency translates to a theoretical single-thread throughput of approximately 3,800--4,000 samples per second, demonstrating that our method achieves an optimal balance between robust detection capability and real-time responsiveness for production network monitoring.
		
		\textbf{Adaptability and Root Cause Analysis.} The limitations of classical methods stem primarily from their rigid decision boundaries and limited capacity to adapt to evolving traffic distributions. OCSVM and HST rely on geometric assumptions that struggle to generalize across diverse network environments, leading to the observed instability in FAR and MDR. Similarly, while SOM offers speed, its shallow topological mapping is often insufficient to capture complex, high-dimensional anomalous patterns without continuous refinement. In contrast, AdaptiveNAD's superiority lies in its robust online adaptability. By dynamically updating the feature memory bank and adjusting dual decision thresholds in an asynchronous update cycle, our model effectively accommodates varying traffic dynamics. This capability ensures robust generalization across encrypted traffic (Darknet), network intrusions (NSL-KDD), and IoT scenarios (Edge-IIoTset), addressing the fundamental lack of plasticity that hinders classical unsupervised approaches.
		
		\begin{table}[t]
			\caption{Comparison with deep learning baselines on CIC-Darknet2020 (\%).}
			\label{tb:comparisonLatestonDarkNet}
			\centering
			\setlength\tabcolsep{1.5pt}
			\begin{tabular}{cccccccc}
				\toprule
				Method & Acc. & Pre. & Rec. & F1 & FAR & MDR & AUC \\
				\midrule
				AE-LOF & 84.13 & 51.19 & 46.45 & 47.37 & 14.89 & 92.22 & 46.45 \\
				AOC-IDS & 64.83 & 52.50 & 77.81 & 42.71 & 35.20 & 9.17 & 77.81 \\
				DAGMM & 93.79 & 54.32 & 79.38 & 55.43 & 5.62 & 35.63 & 79.38 \\
				DeepAID & 97.37 & 72.48 & 91.44 & 72.35 & 2.28 & 12.62 & 90.33 \\
				GDN & 86.63 & 54.45 & 70.00 & 52.59 & 13.18 & \textbf{3.52} & 51.17 \\
				AdaptiveNAD & \textbf{98.24} & \textbf{75.51} & \textbf{92.12} & \textbf{76.74} & \textbf{1.33} & 12.21 & \textbf{93.00} \\
				\bottomrule
			\end{tabular}
		\end{table}
		
		\begin{table}[t]
			\caption{Comparison with deep learning baselines on NSL-KDD (\%).}
			\label{tb:comparisonLatestonNSL-KDD}
			\centering
			\setlength\tabcolsep{1.5pt}
			\begin{tabular}{cccccccc}
				\toprule
				Method & Acc. & Pre. & Rec. & F1 & FAR & MDR & AUC \\
				\midrule
				AE-LOF & 51.59 & 51.53 & 50.60 & 40.90 & 8.15 & 90.66 & 50.60 \\
				AOC-IDS & 83.82 & 86.80 & 84.13 & 83.43 & 29.20 & \textbf{2.54} & 84.13 \\
				DAGMM & 79.92 & 80.88 & 79.61 & 79.62 & 10.11 & 29.72 & 76.60 \\
				DeepAID & 85.97 & 86.95 & 86.25 & 85.93 & 22.03 & 5.45 & 86.25 \\
				GDN & 86.33 & 87.03 & 86.51 & 86.24 & 18.82 & 8.16 & 86.51 \\
				AdaptiveNAD & \textbf{87.56} & \textbf{89.64} & \textbf{87.27} & \textbf{87.20} & \textbf{0.71} & 24.74 & \textbf{87.27} \\
				\bottomrule
			\end{tabular}
		\end{table}
		
		\begin{table}[t]
			\caption{Comparison with deep learning baselines on Edge-IIoTset (\%).}
			\label{tb:comparisonLatestonEdge-IIoTset}
			\centering
			\setlength\tabcolsep{1.5pt}
			\begin{tabular}{cccccccc}
				\toprule
				Method & Acc. & Pre. & Rec. & F1 & FAR & MDR & AUC \\
				\midrule
				AE-LOF & 72.82 & 77.45 & 79.08 & 72.59 & 40.15 & 1.63 & 79.06 \\
				AOC-IDS & 52.31 & 70.09 & 64.04 & 48.73 & 71.82 & \textbf{0.03} & 64.04 \\
				DAGMM & 59.43 & 69.61 & 67.57 & 58.81 & 57.35 & 7.45 & 67.56 \\
				DeepAID & 91.87 & 93.45 & 88.54 & 90.38 & 1.35 & 21.51 & 88.51 \\
				GDN & 40.32 & 30.14 & 54.74 & 33.26 & 89.43 & 1.02 & 54.74 \\
				AdaptiveNAD & \textbf{93.71} & \textbf{95.59} & \textbf{90.70} & \textbf{92.55} & \textbf{0.08} & 18.52 & \textbf{90.70} \\
				\bottomrule
			\end{tabular}
		\end{table}
		
		\begin{table}[t]
			\caption{Inference Latency Comparison (microseconds per sample).}
			\label{tab:latency_comparison}
			\centering
			\setlength\tabcolsep{1.5pt}
			\begin{tabular}{cccc}
				\toprule
				\textbf{Method} & \textbf{CIC-Darknet2020} & \textbf{NSL-KDD} & \textbf{Edge-IIoTset} \\
				\midrule
				AE-LOF & 1042.45 & 1063.63 & 1027.73 \\
				AOC-IDS & 605.29 & 606.79 & 606.83 \\
				DAGMM & 3276.29 & 3295.57 & 3332.14 \\
				DeepAID & 286.30 & 288.70 & 298.70 \\
				GDN & 1709.54 & 1708.06 & 1777.11 \\
				\textbf{AdaptiveNAD} & \textbf{247.20} & \textbf{251.97} & \textbf{245.69} \\
				\bottomrule
			\end{tabular}
		\end{table}
		
		\subsubsection{Comparison with Deep Learning Baselines}
		
		To benchmark the effectiveness of AdaptiveNAD, we compare it against five state-of-the-art unsupervised baselines: AE-LOF, AOC-IDS, DAGMM, DeepAID, and GDN. 
		The comparative results across the three datasets are detailed in Tables~\ref{tb:comparisonLatestonDarkNet}, \ref{tb:comparisonLatestonNSL-KDD}, and \ref{tb:comparisonLatestonEdge-IIoTset}, with inference latency metrics provided in Table~\ref{tab:latency_comparison}.
		
		\textbf{Superior Generalization Across Diverse Environments.} 
		AdaptiveNAD demonstrates robust generalization capabilities, achieving the highest F1-scores across all three heterogeneous benchmarks (Tables~\ref{tb:comparisonLatestonDarkNet}--\ref{tb:comparisonLatestonEdge-IIoTset}). 
		On CIC-Darknet2020, our model attains an F1-score of 76.74\%, significantly outperforming the second-best baseline (DeepAID) by a margin of 4.39\%. 
		This advantage is further amplified on Edge-IIoTset, where AdaptiveNAD achieves 92.55\% F1, surpassing DeepAID (90.38\%) and exhibiting substantial resilience compared to GDN and AOC-IDS, which suffer from catastrophic failures (F1 scores below 50\%) due to the high variability of IoT traffic. 
		Even on the traditional NSL-KDD dataset, AdaptiveNAD maintains its lead with an F1-score of 87.20\%, outperforming both GDN (86.24\%) and DeepAID (85.93\%). 
		These results confirm that our framework effectively adapts to diverse network characteristics—from encrypted tunnels to IoT protocols—without requiring architecture-specific tuning.

		\textbf{Operational Reliability: Minimizing False Alarms.} 
		Beyond aggregate accuracy, AdaptiveNAD exhibits superior operational reliability by maintaining the lowest False Alarm Rates (FAR) across all datasets. 
		High false alarm rates render NIDS impractical for real-world deployment due to alert fatigue. 
		While baselines like DeepAID and AOC-IDS achieve low missed detection rates on NSL-KDD (MDR $<$ 6\%), they incur prohibitive FARs of 22.03\% and 29.20\%, respectively. 
		In sharp contrast, AdaptiveNAD suppresses the FAR to a negligible 0.71\% on the same dataset—a reduction of over 30$\times$ compared to DeepAID. 
		Similarly, on Edge-IIoTset, while GDN suffers from a catastrophic FAR of 89.43\%, AdaptiveNAD maintains a near-zero rate of 0.08\%. 
		Although this rigorous suppression of false positives leads to a moderate trade-off in missed detections on NSL-KDD (24.74\% MDR), the result is a highly usable system that avoids overwhelming security analysts with false alerts.

		\textbf{Real-time Inference Efficiency.} 
		Table~\ref{tab:latency_comparison} confirms that AdaptiveNAD achieves the lowest inference latency, averaging approximately 245--252~$\mu$s per sample. 
		Crucially, these metrics are evaluated in strict single-sample inference mode to simulate real-time stream processing, where buffering flows for batch processing creates unacceptable security delays.
		Under this setting, our efficiency represents a 12$\times$ speedup over reconstruction-heavy baselines like DAGMM ($\sim$3300~$\mu$s) and a 6$\times$ speedup over graph-based methods like GDN ($\sim$1700~$\mu$s). 
		Even compared to DeepAID ($\sim$290~$\mu$s), our framework reduces latency by 13--18\%. 
		This speed advantage comes from the architectural difference: Random Forest executes efficient tree traversals for single predictions, whereas deep neural networks incur significant overhead from matrix operations and gradient computations in non-batched environments.

		\subsection{Ablation Study}
		\label{subsec:ablation}
		
		To disentangle the contributions of the supervised classifier integration and the adaptive thresholding mechanism, we evaluate three system variants: 
		\textbf{NAD} (the baseline VAE model without the Random Forest classifier), 
		\textbf{Fixed-NAD} (VAE integrated with Random Forest, utilizing static dual thresholds derived solely from the initial training phase), and 
		\textbf{AdaptiveNAD} (the proposed framework where dual thresholds are asynchronously updated to accommodate distribution shifts). 
		This comparative analysis isolates the specific gains in detection accuracy, false alarm reduction, and computational efficiency.
		
		\textbf{Random Forest Integration.} 
		As detailed in Tables~\ref{tab:ablation_darknet}--\ref{tab:ablation_iiotset}, Fixed-NAD yields substantial F1 gains of 39.44\% and 41.22\% over NAD on NSL-KDD and Edge-IIoTset, respectively. 
		This validates the necessity of a discriminative classifier for handling complex anomaly patterns. 
		The poor performance of NAD (e.g., 43.62\% F1 on NSL-KDD) stems from the limitation of reconstruction-based methods in distinguishing fine-grained anomalies from normal traffic, resulting in severe under-detection (MDR exceeding 87\%). 
		In contrast, Fixed-NAD leverages the discriminative decision boundaries of Random Forest, effectively mitigating this issue. 
		On CIC-Darknet2020, Fixed-NAD shows marginal improvement (0.07\% F1), suggesting that for datasets with simpler patterns, VAE's reconstruction error alone provides sufficient separability.
		
		\textbf{Adaptive Threshold Mechanism.} 
		The comparison between Fixed-NAD and AdaptiveNAD highlights the importance of dynamic adaptation. 
		AdaptiveNAD outperforms Fixed-NAD by 9.18\%, 4.14\%, and 5.45\% F1 on CIC-Darknet2020, NSL-KDD, and Edge-IIoTset, respectively. 
		These results confirm that periodically updating decision thresholds enables the system to track evolving data distributions, a capability lacking in Fixed-NAD's static boundaries. 
		The improvement is most pronounced on CIC-Darknet2020 (9.18\%), where extreme class imbalance (67:1) renders static thresholds ineffective against shifting normal-abnormal boundaries. 
		The adaptive mechanism continuously recalibrates these boundaries based on accumulated samples, thereby sustaining detection performance during deployment.
		
		\textbf{Latency Analysis.} 
		Table~\ref{tab:ablation_latency} compares the computational overhead. Notably, we evaluate these metrics using strictly unbatched, single-sample inference to reflect real-time streaming, since the temporal overhead of buffering flows for batch processing severely compromises immediate threat response.
		Both Fixed-NAD and AdaptiveNAD achieve a 3.6--3.8$\times$ speedup over NAD, reducing per-sample latency from approximately 900~$\mu$s to 250~$\mu$s. 
		This acceleration occurs because the Random Forest classifier, which relies on efficient tree traversal, replaces the computationally expensive LSTM forward propagation and reconstruction steps required by the VAE-only baseline for anomaly scoring. 
		Notably, the latency difference between Fixed-NAD and AdaptiveNAD is negligible (less than 1.2~$\mu$s). 
		This confirms that the threshold updates occur asynchronously during the model refinement phase without blocking the real-time inference pipeline.
		
		\textbf{Trade-off Analysis.} 
		The ablation results reveal a fundamental trade-off between FAR and MDR. 
		NAD suffers from severe under-detection on complex datasets (MDR exceeding 87\% on NSL-KDD and Edge-IIoTset). 
		Fixed-NAD effectively reduces MDR but exhibits instability: it either spikes the FAR (e.g., 24.24\% on NSL-KDD) or regresses in detection rate (e.g., MDR rising to 24.64\% on CIC-Darknet2020) due to rigid decision boundaries. 
		AdaptiveNAD resolves this dilemma by dynamically fine-tuning thresholds. 
		It converges to a balanced operating point that maximizes the F1-score: on NSL-KDD, it sacrifices a portion of the detection rate (MDR 24.74\%) to drastically cut false alarms (FAR 0.71\%), while on Edge-IIoTset, it optimizes both metrics simultaneously. 
		This demonstrates that the adaptive component serves as a crucial regularizer, calibrating system sensitivity to the specific characteristics of each network environment.
		
		\begin{table}[t]
			\caption{Ablation study on CIC-Darknet2020 (\%).}
			\label{tab:ablation_darknet}
			\centering
			\setlength\tabcolsep{1.5pt}
			\begin{tabular}{cccccccc}
				\toprule
				Method & Acc. & Pre. & Rec. & F1 & FAR & MDR & AUC \\
				\midrule
				NAD & 95.92 & 66.81 & 84.03 & 67.49 & 3.87 & \textbf{9.19} & 65.90 \\
				Fixed-NAD & 96.87 & 69.60 & 85.41 & 67.56 & 2.31 & 24.64 & 76.88 \\
				AdaptiveNAD & \textbf{98.24} & \textbf{75.51} & \textbf{92.12} & \textbf{76.74} & \textbf{1.33} & 12.21 & \textbf{88.78} \\
				\bottomrule
			\end{tabular}
		\end{table}
		
		\begin{table}[t]
			\caption{Ablation study on NSL-KDD (\%).}
			\label{tab:ablation_nslkdd}
			\centering
			\setlength\tabcolsep{1.5pt}
			\begin{tabular}{cccccccc}
				\toprule
				Method & Acc. & Pre. & Rec. & F1 & FAR & MDR & AUC \\
				\midrule
				NAD & 57.38 & 62.87 & 56.00 & 43.62 & \textbf{0.41} & 87.60 & 55.26 \\
				Fixed-NAD & 83.24 & 83.99 & 83.49 & 83.06 & 24.24 & \textbf{8.77} & 53.64 \\
				AdaptiveNAD & \textbf{87.56} & \textbf{89.64} & \textbf{87.27} & \textbf{87.20} & 0.71 & 24.74 & \textbf{84.78} \\
				\bottomrule
			\end{tabular}
		\end{table}
		
		\begin{table}[t]
			\caption{Ablation study on Edge-IIoTset (\%).}
			\label{tab:ablation_iiotset}
			\centering
			\setlength\tabcolsep{1.5pt}
			\begin{tabular}{cccccccc}
				\toprule
				Method & Acc. & Pre. & Rec. & F1 & FAR & MDR & AUC \\
				\midrule
				NAD & 68.42 & 83.66 & 53.16 & 45.88 & 0.10 & 93.64 & 53.00 \\
				Fixed-NAD & 89.55 & 93.11 & 84.51 & 87.10 & \textbf{0.07} & 30.90 & 84.48 \\
				AdaptiveNAD & \textbf{93.71} & \textbf{95.59} & \textbf{90.70} & \textbf{92.55} & 0.08 & \textbf{18.52} & \textbf{90.70} \\
				\bottomrule
			\end{tabular}
		\end{table}
		
		\begin{table}[t]
			\centering
			\caption{\textcolor{black}{Online Inference Latency (microseconds per sample)}}
			\label{tab:ablation_latency}
			\setlength\tabcolsep{1.5pt}
			\begin{tabular}{cccc}
				\toprule
				\textbf{Method} & \textbf{CIC-Darknet2020} & \textbf{NSL-KDD} & \textbf{Edge-IIoTset} \\
				\midrule
				NAD & 918.51 & 961.02 & 892.37 \\
				Fix-NAD & 247.15 & 251.10 & 246.81 \\
				AdaptiveNAD & 247.20 & 251.97 & 245.69 \\
				\midrule
				Speedup & \textbf{3.72}$\times$ & \textbf{3.83}$\times$ & \textbf{3.63}$\times$ \\
				\bottomrule
			\end{tabular}
		\end{table}

		\section{Related Work} \label{sec:RelatedWork}
		In this section, we briefly discuss existing literature from the following three aspects, unsupervised anomaly detection models and online unsupervised anomaly detection approaches.
		
		\textbf{Unsupervised anomaly detection models:} 
		The "Zero-positive" setting, where some normal samples are known by a model, is one of the most common assumptions for unsupervised anomaly detection. Existing literature from this setting can be roughly grouped into three categories, i.e., reconstruction-based, prediction-based, and contrastive-based approaches.
		
		Reconstruction-based methods use the encoder and decoder to reconstruct the input and those with large reconstruction errors are recognized as anomalies. 
		For instance, 
		\cite{audibert2020usad,tuli2023deepft} utilizes multiple encoders and decoders to enhance the representation ability of the model. 
		Moreover, in \cite{abdulaal2021practical}, spectral analysis is employed within a pre-trained autoencoder to synchronize phase information across complex asynchronous multivariate time series data for anomaly detection by minimizing quantile reconstruction losses. \cite{li2021multivariate} introduces an unsupervised approach that models normal data through a hierarchical Variational AutoEncoder with two latent variables for capturing inter-metric and temporal dependencies in time series data. Reconstruction-based methods do not demand any labeled anomalies and are well-suited for non-time series data.
		For time-series data, prediction-based and contrastive-based methods are usually utilized, whereas prediction-based approaches usually use the past series to forecast future windows or specific time stamps, while contrastive-based methods learn data representations through comparisons between positive and negative samples. Given an intuitive assumption, which is neighboring time windows or timestamps in the raw time series share high similarity, thus are considered as positive samples, and those from distant time windows or timestamps are treated as negative samples \cite{oord2018representation}. Similar to reconstruction-based methods, prediction-based approaches can predict those from normal time series with higher probability while tending to wrongly predict abnormal data. \cite{du2019lifelong} considers the scenarios in which operators can manually examine a few suspicious data samples to provide their labels. It improves anomaly detection performance by maximizing the prediction loss to unlearn the reported abnormal samples. \cite{deng2021graph} improves anomaly detection performance by considering relationships between different sensors. 
		\cite{tuli2023deepft} generates self-supervised anomaly labels by reconstructing an input time-series window using a system state decoder, which provides an accurate prediction of the subsequent state. Anomalous labels are assigned to instances where the predicted state exceeds a dynamically determined threshold, while benign labels are assigned to instances where the predicted state falls below this threshold. The threshold is dynamically adjusted based on the extreme values observed in the input time-series data. \cite{han2021deepaid} employs a typical prediction-based anomaly detection method that trains a multivariate LSTM by minimizing the prediction loss of the model's performance of predicting the current time series using previous time series data.
		As a contrastive-based method, \cite{yang2023dcdetector} proposes a contrastive learning-based method that learns representations that distinguish normal time series points from anomalies by maximizing the similarity between two views (patch-wise and in-patch representations) of normal data and enlarging the discrepancy for anomalies. \cite{tonekaboni2021unsupervised} proposed a self-supervised framework tailored for non-stationary time series based on positive-unlabeled (PU) learning that treats negative samples as unknown samples and assigns weights to these samples to handle the problem of sampling bias incurred by non-stationary characteristics of most time series. 
		
		Except for these, some research uses hybrid approaches that optimize a combination of reconstruction loss, prediction loss, or contrastive loss. For instance, \cite{zong2018deep} combines a deep autoencoder with a Gaussian Mixture Model (GMM) and detects anomalies by jointly minimizing a combination of reconstruction loss and energy function. It outperforms traditional clustering methods (e.g., k-means) by effectively handling high-dimensional data through integrated dimensionality reduction and clustering in a single step. \cite{han2022learning} develops a Fused Sparse Autoencoder and Graph Net, which minimizes reconstruction and prediction losses while modeling the relationships between features in multivariate time series. \cite{liu2024timesurl} introduces a self-supervised learning framework that learns time series representations by jointly optimizing reconstruction loss and contrastive loss for both segment and instance levels. It also enhances contrastive learning's performance by injecting more hard negative samples.
		
		
		Although many previous works have validated the flexibility and effectiveness of "zero-positive" anomaly detection through reconstruction-based, prediction-based, or contrastive learning-based approaches, 
		the effective incremental update of the initial model to changing new patterns without any label supervision. 
		Besides, most existing work 
		struggles with high false alarm rates or long latency, while those features are vital for many practical applications such as smart healthcare, manufacturing, or autonomous driving \cite{esteva2019guide,wang2018deep,levinson2011towards}. 
		
		\textbf{Online unsupervised anomaly detection approaches:} Recently, there is only limited research considering online unsupervised anomaly detection for improving ADSs' practicality, such as reducing the false alarm rate \cite{baldini2022online,wang2022distributed,chen2021daemon,du2019lifelong,AOCIDS,odiathevar2021online,zhang2023real}, detecting concept drift \cite{du2019lifelong,AOCIDS,odiathevar2021online,zhang2023real}.
		However, most of them like \cite{baldini2022online,wang2022distributed,chen2021daemon} consider \textquotedblleft online\textquotedblright~as the model being trained offline and utilized for real-time inferences without model updating. Other research, like \cite{du2019lifelong,AOCIDS,odiathevar2021online,zhang2023real} proposes real online unsupervised anomaly detection methods that incrementally update the model to remain synchronized with dynamic environments. For example, \cite{AOCIDS} introduces a pseudo-label generator that generates data labels in a streaming fashion to re-train the model. \cite{odiathevar2021online} adapts the threshold when a batch of input data varies from normal data. 
		
		\section{Conclusions} \label{sec:Conclusions}
		In this paper, we propose Adaptive NAD, a general framework to improve online unsupervised anomaly detection in real-world applications in security domains. In Adaptive NAD, a two-layer anomaly detection strategy (ITAD-S) is designed to generate reliable high-confidence pseudo-labels and provide generalization. Besides, \textcolor{black}{an online training scheme} has been developed to update both the unsupervised deep learning model and the loss distributions using a well-designed threshold calculation technique that offers low false positives. By applying and evaluating Adaptive NAD over two classic anomaly detection datasets, we demonstrate that Adaptive NAD can provide low latency and low false positive rates in online anomaly detection. As part of future work, more models will be attempted to select as first or second layer of Adaptive NAD for optimizing the performance of the proposed framework. Additionally, we will investigate more techniques to automatically configure the hyper-parameters and integrate network events from different real-world systems and use these data to further evaluate the robustness of ADSs built using the Adaptive NAD framework.
		

		\section*{Acknowledgment}
		This work was supported in part supported by the National Natural Science Foundation of China (Grant No. 62406215).
		
		\bibliographystyle{elsarticle-num} 
		\bibliography{main}

		\appendices
		\section{Appendices}
		To show the generality of the proposed threshold calculation technique, we also evaluate it on the CIC-DoHBrw-2020 dataset. Fig. \ref{fi:lossDistrubutinoDoHBrw} is the empirical distribution of the losses obtained from the proposed Adaptive NAD against the best-fitting theoretical distributions. From Fig. \ref{fi:lossDistrubutinoDoHBrw}, we can see that the norm and logistic distributions are the best-fitting theoretical distributions of the normal and abnormal losses.
		
		Here, MLE and Method of Moments (MoM) are utilized to estimate the parameters of the normal and logistic distributions, respectively. 
		The parameters of the normal distribution are calculated as follows:
		\begin{theorem}
			Let $X \sim \text{normal}(\mu, \sigma^2)$. The maximum likelihood estimators under $n$ observations of the parameters are:
		\end{theorem}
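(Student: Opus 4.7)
The plan is to mirror the structure used earlier for the lognormal MLE derivation, since the normal case is essentially its simpler cousin (no $\ln(x_j)$ transform and no $x_j^{-1}$ Jacobian). First I would write the joint likelihood of $n$ i.i.d.\ observations $X=(x_1,\ldots,x_n)$ from the density $f(x\mid\mu,\sigma^2)=(2\pi\sigma^2)^{-1/2}\exp\bigl(-(x-\mu)^2/(2\sigma^2)\bigr)$ as
\begin{equation}
L(\mu,\sigma^2\mid X)=(2\pi\sigma^2)^{-n/2}\exp\!\Bigl(-\tfrac{1}{2\sigma^2}\sum_j(x_j-\mu)^2\Bigr). \nonumber
\end{equation}
Taking logarithms yields the log-likelihood
\begin{equation}
\ell(\mu,\sigma^2\mid X)=-\tfrac{n}{2}\ln(2\pi)-\tfrac{n}{2}\ln(\sigma^2)-\tfrac{1}{2\sigma^2}\sum_j(x_j-\mu)^2, \nonumber
\end{equation}
which is the object I would maximize.

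Next I would compute the two first-order conditions. Differentiating $\ell$ with respect to $\mu$ gives $\partial\ell/\partial\mu=\sigma^{-2}\sum_j(x_j-\mu)$; setting this to zero and solving immediately produces $\hat{\mu}=\frac{1}{n}\sum_j x_j$, the sample mean. Then I would substitute $\hat{\mu}$ back into $\ell$ and differentiate with respect to $\sigma^2$, obtaining $\partial\ell/\partial\sigma^2=-\frac{n}{2\sigma^2}+\frac{1}{2(\sigma^2)^2}\sum_j(x_j-\hat{\mu})^2$; setting this to zero yields $\hat{\sigma}^2=\frac{1}{n}\sum_j(x_j-\hat{\mu})^2$. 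This recovers the two standard estimators that the theorem is presumably stating.

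Finally, to be rigorous I would briefly verify that the stationary point is indeed a maximum rather than a saddle or minimum, either by inspecting the second derivatives (the Hessian of $\ell$ is negative definite at $(\hat{\mu},\hat{\sigma}^2)$, since $\partial^2\ell/\partial\mu^2=-n/\sigma^2<0$ and the determinant is positive) or by appealing to the well-known concavity of the Gaussian log-likelihood in $(\mu,\sigma^2)$ on the relevant domain. The main obstacle is not really mathematical here, as the derivation is textbook; the only subtlety to flag is the order of optimization, namely that one should either profile out $\mu$ first before differentiating in $\sigma^2$, or equivalently note that $\partial\ell/\partial\mu=0$ yields $\hat{\mu}$ independently of $\sigma^2$, so the substitution is legitimate. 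Given the parallel treatment in the lognormal proof already included in the paper, I would keep the exposition tight and reuse the same formatting conventions.
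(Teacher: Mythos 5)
Your proposal is correct and follows essentially the same route as the paper: write the Gaussian likelihood, pass to the log-likelihood, set $\partial\ell/\partial\mu=0$ and $\partial\ell/\partial\sigma^2=0$, and solve to obtain the sample mean and $\hat{\sigma}^2=\frac{1}{n}\sum_j(x_j-\hat{\mu})^2$ (your added second-order check is a minor extra the paper omits). Note that the theorem as printed drops the square in the summand of $\hat{\sigma}^2$, but your derivation agrees with what the paper's own proof actually establishes.
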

		\begin{itemize}
			\item[(i)] \(
			\hat{\mu} = \frac{\sum_{j=1}^n x_j}{n},
			\)
			\item[(ii)] \(
			\hat{\sigma^2} = \frac{\sum_{j=1}^n (x_j - \hat{\mu})}{n}
			\)
		\end{itemize}
		
		\begin{proof}\renewcommand{\qedsymbol}{}
			\begin{equation}
				\begin{split}
					L(\mu,\sigma^2|X) 
					&= \prod_{j = 1}^{n}[f(x_j|\mu,\sigma^2)], \\
					&= \prod_{j = 1}^{n}\frac{1}{\sqrt{2\pi}\sigma}e^{-\frac{(x_j-\mu)^2}{2\sigma^2}}, \\
					&={(2\pi\sigma^2)}^{-n/2}e^{-\frac{(x_j-\mu)^2}{2\sigma^2}},\\
					&={-\frac{n}{2}ln(2\pi\sigma^2) -\sum_{j = 1}^{n}{\frac{(x_j-\mu)^2}{2\sigma^2}}}. \nonumber
				\end{split}
			\end{equation}
			Let:
			\begin{equation}
				\frac{\partial L}{\partial \mu} = \frac{1}{\sigma^2} \sum_{j=1}^n (x_j-\mu)=0, \nonumber
			\end{equation}
			we have $\hat{\mu}$ as:
			\begin{equation}
				\hat{\mu}=\frac{\sum_{j=1}^n x_j}{n}. \nonumber
			\end{equation}
			Let
			\begin{equation}
				\frac{\partial L}{\partial \sigma^2} = -\frac{n}{2} \cdot \frac{1}{\sigma^2}+\frac{1}{2\sigma^4} \sum_{j=1}^n (x_j-\mu)^2, \nonumber
			\end{equation}
			we have $\hat{\sigma^2}$ as:
			\begin{equation}
				\hat{\sigma^2}=\frac{1}{n}\sum_{j=1}^n(x_j-\frac{\sum_{j=1}^n x_j}{n})^2.\nonumber
			\end{equation}
		\end{proof}
		
		The parameter estimation of the logistic distribution is below:
		\begin{theorem}
			Let $X \sim \text{logistic}(\mu, \gamma)$. The maximum likelihood estimators under $n$ observations of the parameters are:
		\end{theorem}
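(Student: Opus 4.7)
The plan is to derive the MLE for the logistic distribution by the same likelihood-maximization recipe already used for the normal and lognormal cases earlier in the appendix. The first step would be to write down the logistic density with location $\mu$ and scale $\gamma>0$,
$$f(x\mid\mu,\gamma)=\frac{e^{-(x-\mu)/\gamma}}{\gamma\bigl(1+e^{-(x-\mu)/\gamma}\bigr)^{2}},$$
form $L(\mu,\gamma\mid X)=\prod_{j=1}^{n} f(x_j\mid\mu,\gamma)$, take logs, and simplify (using the abbreviation $z_j=(x_j-\mu)/\gamma$) to
$$\ell(\mu,\gamma)=-n\log\gamma-\sum_{j=1}^{n} z_j-2\sum_{j=1}^{n}\log\bigl(1+e^{-z_j}\bigr).$$

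Next I would compute the two score functions. Since $\partial z_j/\partial\mu=-1/\gamma$, the chain rule and the identity $e^{-z}/(1+e^{-z})=1-\sigma(z)$ (with $\sigma$ the sigmoid) give
$$\frac{\partial\ell}{\partial\mu}=\frac{1}{\gamma}\Bigl[n-2\sum_{j=1}^{n}\frac{e^{-z_j}}{1+e^{-z_j}}\Bigr].$$
For the scale, using $\partial z_j/\partial\gamma=-z_j/\gamma$ and the algebraic rearrangement $1-2e^{-z}/(1+e^{-z})=\tanh(z/2)$,
$$\frac{\partial\ell}{\partial\gamma}=\frac{1}{\gamma}\Bigl[-n+\sum_{j=1}^{n} z_j\tanh(z_j/2)\Bigr].$$
Setting both scores to zero then produces the MLE conditions for $(\hat\mu,\hat\gamma)$, and a quick second-derivative check (the Hessian of $\ell$ is negative definite because $\log(1+e^{-z})$ is convex and $-\log\gamma$ is concave in $\gamma$ on the relevant region) would confirm that the stationary point is indeed a maximum.

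The main obstacle is that, unlike the normal/lognormal derivations earlier, the logistic score equations are genuinely transcendental in $\mu$ and $\gamma$ and admit no closed-form solution. The most one can state exactly is the implicit pair
$$\sum_{j=1}^{n}\frac{1}{1+e^{(x_j-\hat\mu)/\hat\gamma}}=\frac{n}{2},\qquad \sum_{j=1}^{n}\frac{x_j-\hat\mu}{\hat\gamma}\tanh\!\Bigl(\frac{x_j-\hat\mu}{2\hat\gamma}\Bigr)=n,$$
which in practice must be solved numerically (e.g.\ Newton--Raphson, jointly or via profile likelihood in $\mu$). If the paper in fact presents the closed forms $\hat\mu=\bar x$ and $\hat\gamma=\sqrt{3}\,s/\pi$, those are not MLEs but the method-of-moments estimators obtained by matching the logistic moments ($E[X]=\mu$, $\mathrm{Var}[X]=\pi^{2}\gamma^{2}/3$) to the sample moments; this would align with the paper's own remark that MoM is used for the logistic fit, and I would flag it as a statement-versus-derivation discrepancy rather than attempt to present nonexistent closed-form MLEs.
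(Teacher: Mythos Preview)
Your derivation of the logistic score equations is correct, and so is your diagnosis: the logistic MLE has no closed form, and the formulas $\hat\mu=\bar x$, $\hat\gamma=\sqrt{3}\,s/\pi$ are method-of-moments estimators, not maximum likelihood estimators. The paper's own ``proof'' does exactly what you anticipated in your final paragraph: it never writes down a likelihood at all, but simply invokes $E[X]=\mu$ and $\mathrm{Var}[X]=\pi^2\gamma^2/3$, sets these equal to the sample mean and sample variance, and solves. This matches the text preceding the theorem, which explicitly says MoM is used for the logistic fit; the theorem statement calling the result ``maximum likelihood estimators'' is simply mislabeled.

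So your route (genuine likelihood maximization leading to transcendental score equations) is the one that actually addresses the theorem as stated, while the paper's route (moment matching) is the one that actually produces the displayed formulas. What your approach buys is mathematical honesty about what the MLE is; what the paper's approach buys is the closed-form expressions it needs for the adaptive threshold computation. Your instinct to flag this as a statement-versus-derivation discrepancy is exactly right, and there is nothing further to prove here beyond what you have already written.
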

		\begin{itemize}
			\item[(i)] \(
			\hat{\mu} = \frac{1}{n}\sum_{j=1}^n x_j,
			\)
			\item[(ii)] \(
			\hat{\gamma} = \frac{1}{\pi} \sqrt{\frac{3}{n}\cdot \sum_{j=1}^n (x_j-\frac{\sum_{j=1}^n x_j}{n})^2}.
			\)
		\end{itemize}
		
		\begin{proof}\renewcommand{\qedsymbol}{}
			Because that the expectation value of the logistic distribution is $\mu$ and its variance is $\frac{\pi^2}{3}\gamma^2$, then
			\begin{equation}
				\mu=\overline{x},\nonumber
			\end{equation}
			which means:
			\begin{equation}
				\hat{\mu} = \frac{1}{n}\sum_{j=1}^n x_j. \nonumber
			\end{equation}
			Similarly,
			\begin{equation}
				\frac{\pi^2}{3}\gamma^2 = \frac{1}{n}\sum_{j=1}^n(x_j - \mu)^2,\nonumber 
			\end{equation}
			\begin{equation}
				\hat{\gamma}=\frac{1}{\pi} \sqrt{\frac{3}{n}\cdot \sum_{j=1}^n (x_j-\frac{\sum_{j=1}^n x_j}{n})^2}.\nonumber 
			\end{equation}
		\end{proof}

		The threshold $T_1$ and $T_1$ for the CIC-DoHBrw-2020 dataset is obtained by Proposition \ref{prop:threshold2}. 
		\begin{prop}
			\label{prop:threshold2}
			Let $T_1$ and $T_2$ be the threshold for network traffic, and $p_1$ and $p_2$ are the percentiles for normal and abnormal loss distributions related to $T_1$ and $T_2$. If the normal loss follows a normal distribution with parameter $\mu_1$ and $\sigma$ while the abnormal loss follows a logistic distribution with parameter $\mu_2$ and $\gamma$, then $T_1$ and $T_2$ are calculated as follows with a ($1-\alpha$) confidence level ($0<\alpha<1$):
			\begin{equation}
				T_1 = \Phi_1^{-1}(p_1)\sigma+\mu_1.\nonumber
			\end{equation}
			\begin{equation}
				T_2 = \Phi_2^{-1}(1-p_2)\gamma+\mu_2.\nonumber
			\end{equation}
		\end{prop}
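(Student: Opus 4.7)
The plan is to mirror the proof of Proposition~\ref{prop:threshold} almost line for line, changing only the parametric forms of the two CDFs. I would begin by letting $X_1$ denote the normal-class loss and $X_2$ the abnormal-class loss, with CDFs $F_1$ and $F_2$. The definitions $F_1(T_1) = P(X_1 \le T_1) = p_1$ and $1 - F_2(T_2) = P(X_2 \ge T_2) = p_2$ are simply the statements that $T_1$ is the $p_1$-quantile of the normal loss and $T_2$ is the $(1-p_2)$-quantile of the abnormal loss. No new probabilistic content is needed beyond these definitions.

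Next, I would plug in the two distributional assumptions. For $X_1 \sim \mathcal{N}(\mu_1,\sigma^2)$, the standardization $(X_1-\mu_1)/\sigma$ is standard normal, so $F_1(T_1) = \Phi_1\!\left(\tfrac{T_1-\mu_1}{\sigma}\right)$ where $\Phi_1$ denotes the standard normal CDF. For $X_2 \sim \text{Logistic}(\mu_2,\gamma)$, the standardization $(X_2-\mu_2)/\gamma$ is standard logistic, so $F_2(T_2) = \Phi_2\!\left(\tfrac{T_2-\mu_2}{\gamma}\right)$ with $\Phi_2(z) = 1/(1+e^{-z})$. Both CDFs are strictly increasing and therefore invertible on their supports, which is exactly what is needed to solve for $T_1$ and $T_2$.

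The closing step is a direct inversion. Applying $\Phi_1^{-1}$ to $p_1 = \Phi_1\!\left(\tfrac{T_1-\mu_1}{\sigma}\right)$ gives $T_1 = \Phi_1^{-1}(p_1)\sigma + \mu_1$. Rewriting the abnormal condition as $\Phi_2\!\left(\tfrac{T_2-\mu_2}{\gamma}\right) = 1-p_2$ and applying $\Phi_2^{-1}$ yields $T_2 = \Phi_2^{-1}(1-p_2)\gamma + \mu_2$, matching the claim. The estimators $\hat{\mu}_1,\hat{\sigma}^2$ and $\hat{\mu}_2,\hat{\gamma}$ obtained in the two preceding theorems would be substituted in practice.

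Honestly, there is no real obstacle—the argument is essentially bookkeeping with location-scale families. The only thing to be careful about is notational: since $\Phi_1$ and $\Phi_2$ refer to two different standardized CDFs, I would state this explicitly before inverting, so that the asymmetric appearance of $p_1$ versus $1-p_2$ is clearly a consequence of $T_1$ being a left-tail quantile of the normal loss while $T_2$ is a right-tail quantile of the abnormal loss, rather than any difference in how the two families are treated.
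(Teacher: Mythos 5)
Your proposal is correct and follows essentially the same route as the paper's proof: express $T_1$ and $T_2$ as quantile conditions on the normal and logistic CDFs via standardization, then invert $\Phi_1$ and $\Phi_2$. The only (harmless) difference is that you make the invertibility of the standardized CDFs and the explicit form $\Phi_2(z)=1/(1+e^{-z})$ more explicit than the paper does.
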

		
		\begin{proof}\renewcommand{\qedsymbol}{}
			Let $X_1$ and $X_2$ be the loss of normal observations and loss of abnormal observation with cumulative distribution functions $F_1$ and $F_2$ respectively formulated as follows:
			\begin{equation}
				F_1(T_1)=P(X_1\leq T_1)= p_1,\nonumber
			\end{equation}
			\begin{equation}
				F_2(T_2)=P(X_2\geq T_2)= p_2.\nonumber
			\end{equation}
			
			By assumption, $F_{1}(T_1)$ follows a normal distribution and $F_{2}(T_2)$ follows a logistic distribution that are defined as follows:
			\begin{align}
				F_{1}(T_1)&=P(X_1\leq T_1)\nonumber\\
				&=\int_{0}^{T_1}f_1(x_1)dx_1
				=\Phi_1\left(\frac{T_1-\mu_1}{\sigma}\right),\nonumber
			\end{align}
			\begin{align}
				F_{2}(T_2)&=1-P(X_2\leq T_2)\nonumber\\&=1-\int_{0}^{T_2}f_2(x_2)dx_2=1-\Phi_2\left(\frac{T_2-\mu_2}{\gamma}\right),\nonumber
			\end{align}
			where $\Phi_1$ and $\Phi_2$ are the cumulative distribution functions of the normal and logistic distributions. $f_1(x_1)$ and $f_2(x_2)$ are the probability density function of the normal and logistic distributions, which are calculated by:
			\begin{align}
				f_1(x_1)= \frac{1}{\sqrt{2\pi}\sigma}exp(-\frac{(x-\mu)^2}{2\sigma^2}),\\ \nonumber
				f_2(x_2)=\frac{e^{-\frac{x-u}{\gamma}}}{\gamma(1+e^{-\frac{x-u}{\gamma}})^2}.\nonumber
			\end{align}
			Based on the above equations, the threshold $T_{1}$ and $T_{2}$ are calculated as follows:
			\begin{equation}
				p_1 = \Phi_1\left(\frac{T_1-\mu_1}{\sigma}\right),\nonumber
			\end{equation}
			\begin{equation}
				p_2 = 1-\Phi_2\left(\frac{T_2-\mu_2}{\gamma}\right),\nonumber
			\end{equation}
			\begin{equation}
				\Phi_1^{-1}(p_1) =\frac{T_1-\mu_1}{\sigma},\nonumber
			\end{equation}
			\begin{equation}
				\Phi_2^{-1}(1-p_2) =\frac{T_2-\mu_2}{\gamma},\nonumber
			\end{equation}
			\begin{equation}
				T_1 = \Phi_1^{-1}(p_1)\sigma+\mu_1,\nonumber
			\end{equation}
			\begin{equation}
				T_2 = \Phi_2^{-1}(1-p_2)\gamma+\mu_2.\nonumber
			\end{equation}
		\end{proof}

	\begin{figure}[t]
		\centering
		\begin{minipage}{.24\textwidth}
			\scalebox{1.}{\includegraphics[width=1.\linewidth]{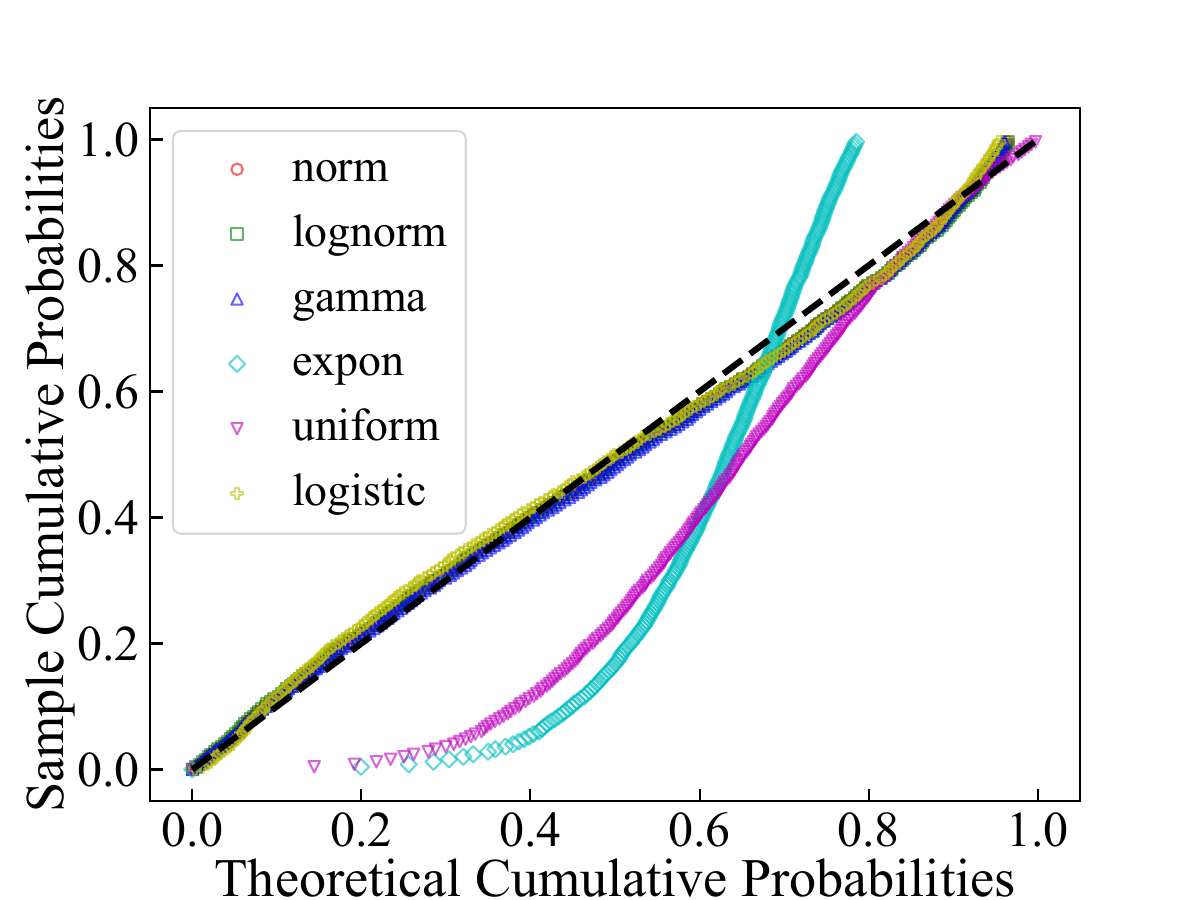}} \\
			\vspace{-1mm}
			(a) Normal Distribution.
		\end{minipage}%
		\begin{minipage}{.24\textwidth}
			\scalebox{1}{\includegraphics[width=1\linewidth]{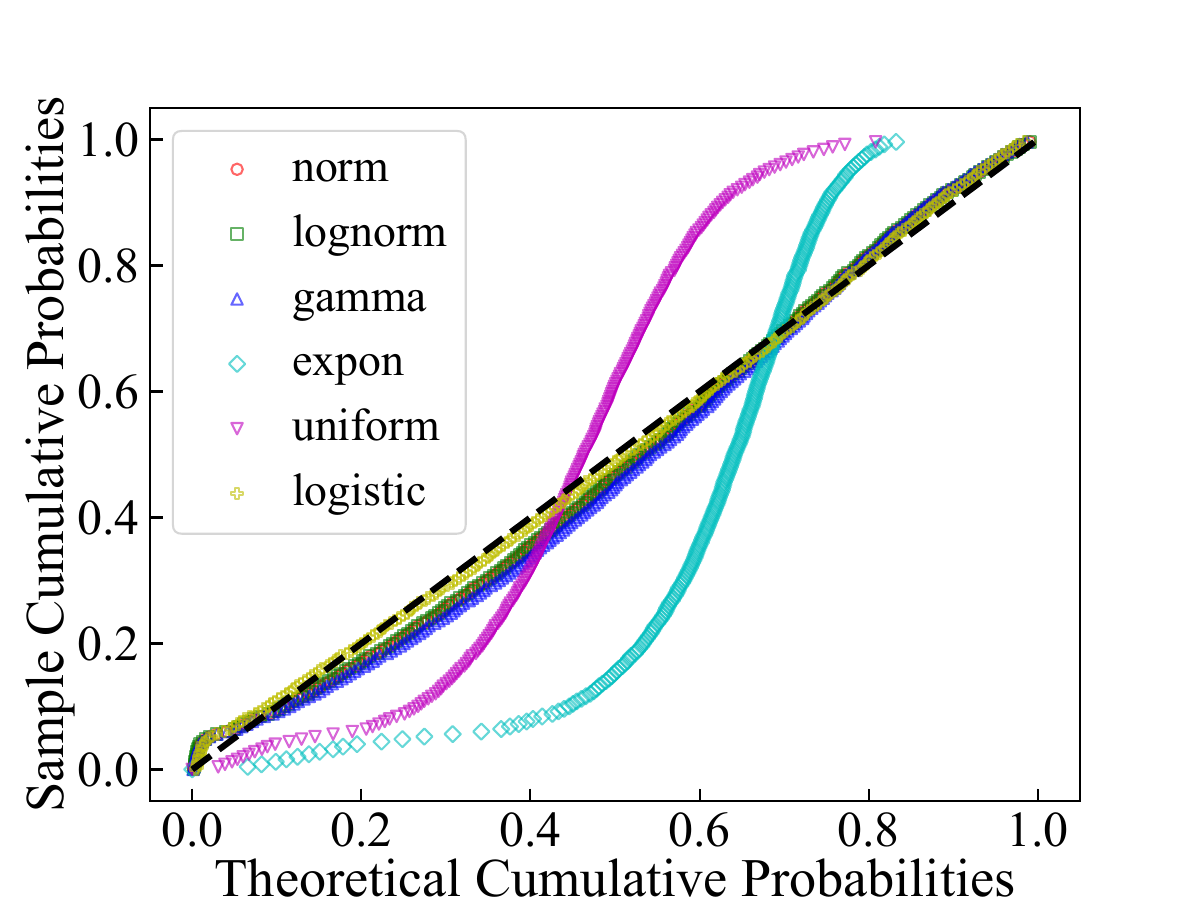}} \\
			\vspace{-1mm}
			(b) Abnormal Distribution.
		\end{minipage}%
		\caption{Loss distributions of normal and abnormal network traffic for CIC-DoHBrw-2020.}
		\label{fi:lossDistrubutinoDoHBrw}
		\vspace{-2mm}
	\end{figure}
	
	\begin{figure}[t]
		\centering
		\begin{minipage}{.24\textwidth}
			\centering
			\scalebox{.23}{\includegraphics{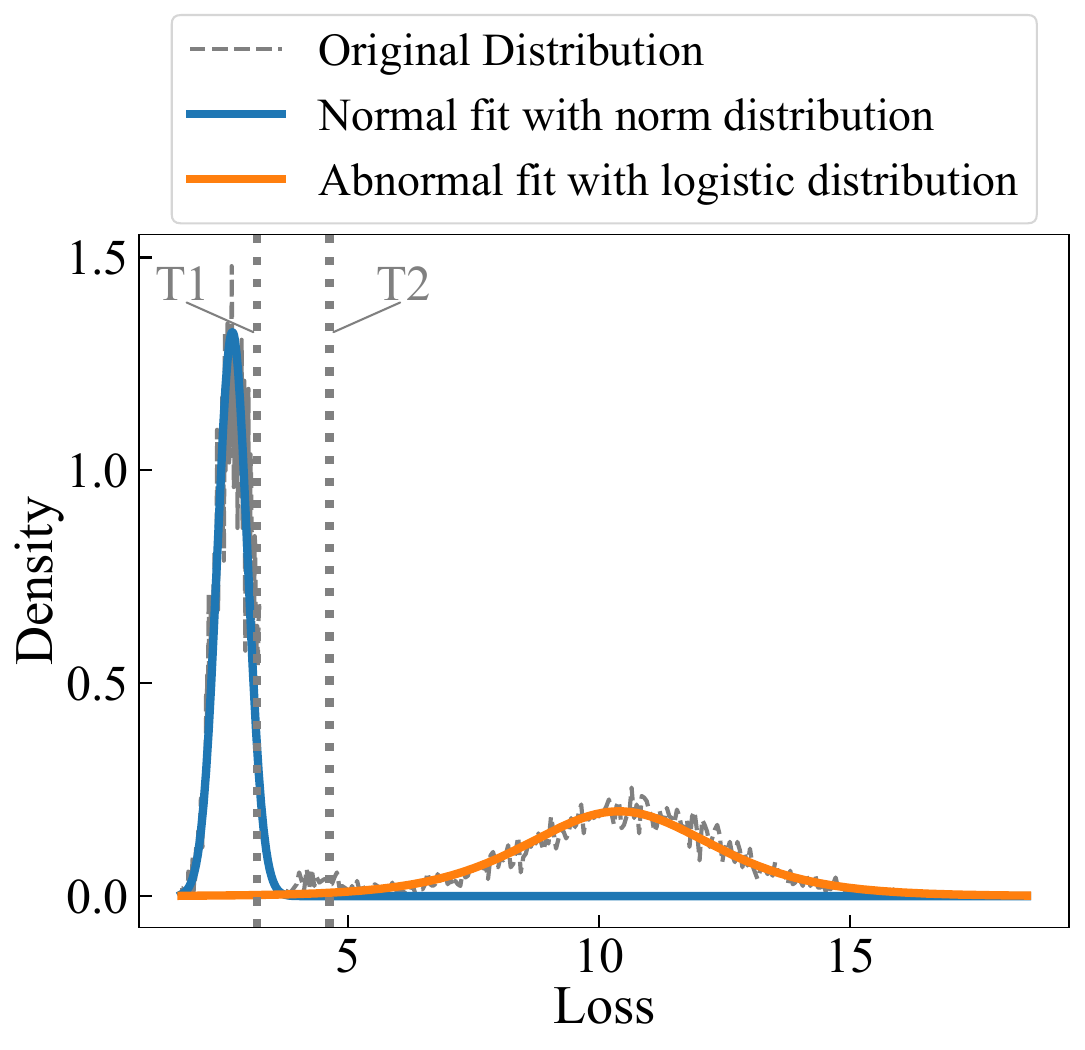}} \\
			\vspace{-1mm}
			(a) Network Distributions.
		\end{minipage}%
		\begin{minipage}{.24\textwidth}
			\centering
			\scalebox{.21}{\includegraphics{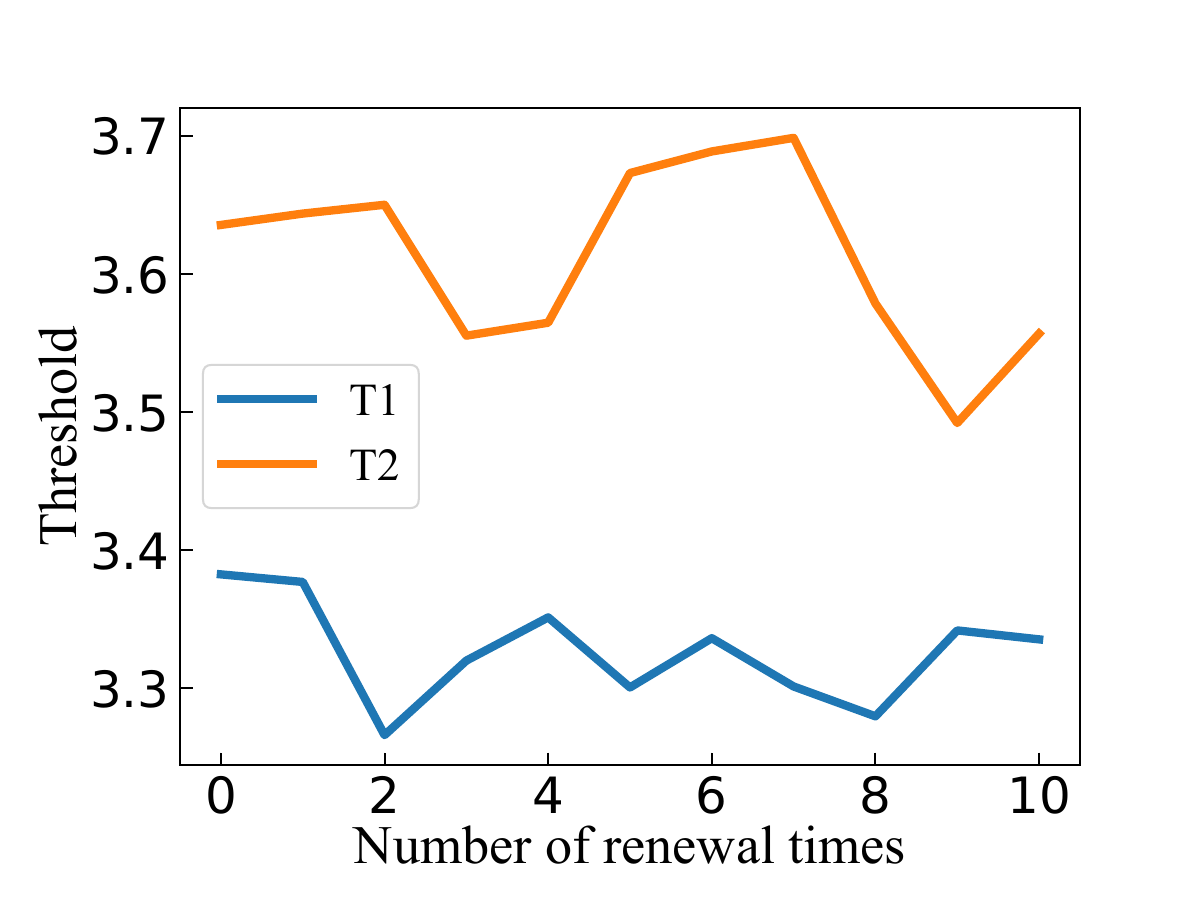}} 
			\vspace{-8mm}
			(b) Threshold update.
		\end{minipage}
		\caption{Adaptive threshold calculation on CIC-DoHBrw-2020.}
		\label{fi:networkN02DoHBrw}
		\vspace{-2mm}
	\end{figure}
	
	Fig. \ref{fi:lossDistrubutinoDoHBrw} illustrates the empirical distribution of the losses obtained from the Adaptive NAD model against the best-fitting theoretical distributions on the CIC-DoHBrw-2020 dataset. The description of renewed thresholds of the proposed adaptive NAD model for both $T_1$ and $T_2$ on the CIC-DoHBrw-2020 dataset is given in Fig. \ref{fi:networkN02DoHBrw}. 
	
\end{sloppypar}
\end{document}